\definecolor{shadowww}{gray}{0.4}
\definecolor{shadoww}{gray}{0.7}
\definecolor{shadow}{gray}{0.7}
\begin{document}
%
% paper title
% can use linebreaks \\ within to get better formatting as desired
\title{A Hybrid Loss for Multiclass \\and Structured Prediction}
%
%
% author names and IEEE memberships
% note positions of commas and nonbreaking spaces ( ~ ) LaTeX will not break
% a structure at a ~ so this keeps an author's name from being broken across
% two lines.
% use \thanks{} to gain access to the first footnote area
% a separate \thanks must be used for each paragraph as LaTeX2e's \thanks
% was not built to handle multiple paragraphs
%
%
%\IEEEcompsocitemizethanks is a special \thanks that produces the bulleted
% lists the Computer Society journals use for "first footnote" author
% affiliations. Use \IEEEcompsocthanksitem which works much like \item
% for each affiliation group. When not in compsoc mode,
% \IEEEcompsocitemizethanks becomes like \thanks and
% \IEEEcompsocthanksitem becomes a line break with idention. This
% facilitates dual compilation, although admittedly the differences in the
% desired content of \author between the different types of papers makes a
% one-size-fits-all approach a daunting prospect. For instance, compsoc
% journal papers have the author affiliations above the "Manuscript
% received ..."  text while in non-compsoc journals this is reversed. Sigh.

\author{Qinfeng~Shi,
        ~Mark~Reid,
        ~Tiberio~Caetano,% <-this % stops a space
        ~Anton~van den~Hengel
        ~and~Zhenhua Wang
\IEEEcompsocitemizethanks{\IEEEcompsocthanksitem Q. Shi, A. Hengel and Z. Wang are
with The Australian Centre for Visual Technologies and The Computer Vision group of The University of Adelaide, SA, Australia.
 \protect\\
% note need leading \protect in front of \\ to get a newline within \thanks as
% \\ is fragile and will error, could use \hfil\break instead.
E-mail: \{javen.shi, anton.vandenhengel,zhenhua.wang01\}@adelaide.edu.au}% <-this % stops a space
\IEEEcompsocitemizethanks{\IEEEcompsocthanksitem M. Reid and T. Caetano are
with The Australian National University and NICTA, ACT, Australia. \protect\\
E-mail:mark.reid@anu.edu.au, Tiberio.Caetano@nicta.com.au}
\thanks{}}

% note the % following the last \IEEEmembership and also \thanks -
% these prevent an unwanted space from occurring between the last author name
% and the end of the author line. i.e., if you had this:
%
% \author{....lastname \thanks{...} \thanks{...} }
%                     ^------------^------------^----Do not want these spaces!
%
% a space would be appended to the last name and could cause every name on that
% line to be shifted left slightly. This is one of those "LaTeX things". For
% instance, "\textbf{A} \textbf{B}" will typeset as "A B" not "AB". To get
% "AB" then you have to do: "\textbf{A}\textbf{B}"
% \thanks is no different in this regard, so shield the last } of each \thanks
% that ends a line with a % and do not let a space in before the next \thanks.
% Spaces after \IEEEmembership other than the last one are OK (and needed) as
% you are supposed to have spaces between the names. For what it is worth,
% this is a minor point as most people would not even notice if the said evil
% space somehow managed to creep in.

% The paper headers
\markboth{IEEE Transactions on Pattern Analysis \& Machine Intelligence, Final draft, Feb.~2014}%
{Shell \MakeLowercase{\textit{et al.}}: Draft only}
% The only time the second header will appear is for the odd numbered pages
% after the title page when using the twoside option.
%
% *** Note that you probably will NOT want to include the author's ***
% *** name in the headers of peer review papers.                   ***
% You can use \ifCLASSOPTIONpeerreview for conditional compilation here if
% you desire.

% The publisher's ID mark at the bottom of the page is less important with
% Computer Society journal papers as those publications place the marks
% outside of the main text columns and, therefore, unlike regular IEEE
% journals, the available text space is not reduced by their presence.
% If you want to put a publisher's ID mark on the page you can do it like
% this:
%\IEEEpubid{0000--0000/00\$00.00~\copyright~2007 IEEE}
% or like this to get the Computer Society new two part style.
%\IEEEpubid{\makebox[\columnwidth]{\hfill 0000--0000/00/\$00.00~\copyright~2007 IEEE}%
%\hspace{\columnsep}\makebox[\columnwidth]{Published by the IEEE Computer Society\hfill}}
% Remember, if you use this you must call \IEEEpubidadjcol in the second
% column for its text to clear the IEEEpubid mark (Computer Society jorunal
% papers don't need this extra clearance.)

% use for special paper notices
%\IEEEspecialpapernotice{(Invited Paper)}

% for Computer Society papers, we must declare the abstract and index terms
% PRIOR to the title within the \IEEEcompsoctitleabstractindextext IEEEtran
% command as these need to go into the title area created by \maketitle.
\IEEEcompsoctitleabstractindextext{%
\begin{abstract}
We propose a novel hybrid loss for multiclass and structured prediction
problems that is a convex combination of a log loss for Conditional
Random Fields (CRFs) and a multiclass hinge loss for Support Vector
Machines (SVMs). We provide a sufficient condition for when the
hybrid loss is Fisher consistent for classification. This condition 
depends on a measure of dominance between labels -- specifically, 
%the gap in per observation probabilities between the most likely 
%labels. 
the gap between the probabilities of the best 
label and the second best label.
We also prove Fisher consistency is necessary for parametric consistency
when learning models such as CRFs. We demonstrate empirically that the hybrid loss typically performs
least as well as -- and often better than -- both of its constituent losses on a
variety of tasks, such as human action recognition. % and joint image-object
%categorisation. 
In doing so we also provide an empirical comparison of the
efficacy of probabilistic and margin based approaches to multiclass
and structured prediction.
\end{abstract}
% IEEEtran.cls defaults to using nonbold math in the Abstract.
% This preserves the distinction between vectors and scalars. However,
% if the journal you are submitting to favors bold math in the abstract,
% then you can use LaTeX's standard command \boldmath at the very start
% of the abstract to achieve this. Many IEEE journals frown on math
% in the abstract anyway. In particular, the Computer Society does
% not want either math or citations to appear in the abstract.

% Note that keywords are not normally used for peerreview papers.
\begin{IEEEkeywords}
Conditional Random Fields, Support Vector Machines, Hybrid Loss,
Fisher Consistency, Structured Learning
\end{IEEEkeywords}}

% make the title area
\maketitle

% To allow for easy dual compilation without having to reenter the
% abstract/keywords data, the \IEEEcompsoctitleabstractindextext text will
% not be used in maketitle, but will appear (i.e., to be "transported")
% here as \IEEEdisplaynotcompsoctitleabstractindextext when compsoc mode
% is not selected <OR> if conference mode is selected - because compsoc
% conference papers position the abstract like regular (non-compsoc)
% papers do!
\IEEEdisplaynotcompsoctitleabstractindextext
% \IEEEdisplaynotcompsoctitleabstractindextext has no effect when using
% compsoc under a non-conference mode.

% For peer review papers, you can put extra information on the cover
% page as needed:
% \ifCLASSOPTIONpeerreview
% \begin{center} \bfseries EDICS Category: 3-BBND \end{center}
% \fi
%
% For peerreview papers, this IEEEtran command inserts a page break and
% creates the second title. It will be ignored for other modes.
\IEEEpeerreviewmaketitle

\section{Introduction}
\IEEEPARstart{C}onditional Random Fields (CRFs) and Support
Vector Machines (SVMs) can be seen as representative of two
different approaches to classification problems. The former is
purely \emph{probabilistic} -- the conditional probability of
classes given each observation is explicitly modelled -- while the
latter is not -- classification is
performed without any attempt to model probabilities.
Both approaches have their strengths and weaknesses. CRFs~\cite{LafMcCPer01,ShaPer03} 
are known to yield the Bayes optimal solution asymptomatically but do not have known tight generalisation bounds. In
contrast, SVMs have tighter generalisation bounds which typically shrink as the margin grows, and can easily incorporate interesting label-cost such as F1 score or hamming distance in structured cases. But SVMs could be inconsistent when there are
more than two classes~\cite{TewBar07,Liu07}.

Despite their differences, CRFs and SVMs appear very
similar when viewed as optimisation problems. The most salient
difference is the loss used by each: CRFs are trained using a log
loss while SVMs typically use a hinge loss.
In an attempt to capitalise on their relative strengths and avoid
their weaknesses, we propose a novel \emph{hybrid loss} which ``blends'' the two losses. 
After some background (\S\ref{sec:background}) we provide the following analysis:
We argue that Fisher Consistency
for Classification (FCC) -- a.k.a.\ classification calibration -- 
is too coarse a notion and introduce a
distribution-dependent refinement called Conditional Fisher Consistency
for Classification (\S\ref{sec:consistency}).
We prove the hybrid loss is conditionally FCC and give a noise condition that relates 
the hybrid loss's mixture parameter to a margin-like property of the data distribution 
(\S\ref{sec:cfcc}).
We then show that, although FCC is effectively a non-parametric condition, it is
also a necessary condition for consistent risk minimisation using parametric models 
(\S\ref{sec:parametric-fcc}).
Finally, we empirically test the hybrid loss on 
various domains including multiclass classification, text chunking, human action recognition and show it consistently performs as least as well as -- and often better than -- both of its constituent losses (\S\ref{sec:experiments}).

\section{Losses}
\label{sec:background} In classification problems
\emph{observations} $x\in\Xcal$ are paired with \emph{labels}
$y\in\Ycal$ via some joint distribution $D$ over $\Xcal\times\Ycal$.
We will write $D(x,y)$ for the joint probability and $D(y|x)$ for
the conditional probability of $y$ given $x$. Since the labels $y$
are finite and discrete we will also use the notation $D_y(x)$ for
the conditional probability to emphasise that distributions over
$\Ycal$ can be thought of as vectors in $\RR^k$ for $k=|\Ycal|$.
We will also use $p$ and $q$ to denote distributions over $\Ycal$
but reserve their use for distributions generated by models. 

\subsection{Multiclass Prediction}
\label{sec:multiclass}
When the number of possible labels is $k=|\Ycal|>2$ the 
classification problem is known as a \emph{multiclass} classification problem. 

Given $m$ training observations $S=\{(x_i,y_i)\}_{i=1}^m$ drawn i.i.d.\ 
from $D$, the aim of the learner is to produce a predictor $h : \Xcal
\to \Ycal$ that minimises the \emph{misclassification error} $e_D(h)
= \PP_D\left[ h(x) \ne y \right]$. Since the true distribution is
unknown, an approximate solution to this problem is typically found
by minimising a regularised empirical estimate of the risk for a
\emph{surrogate loss} $\ell$. Examples of surrogate losses will be
discussed below.

Once a loss is specified, a solution is found by solving
\begin{equation}\label{eq:erm}
    \min_{f} \frac{1}{m}\sum_{i=1}^m \ell(f(x_i),y_i) + \Omega(f)
\end{equation}
where each \emph{model} $f : \Xcal \to \RR^k$ assigns a vector of
scores $f(x)$ to each observation and the regulariser $\Omega(f)$
penalises overly complex functions. A model $f$ found in this way
can be transformed into a predictor by defining $h_f(x) =
\argmax_{y\in\Ycal} f_y(x)$ where ties are broken in some 
arbitrary but deterministic way (see Section~\ref{sec:parametric-fcc} for 
details). 
We will overload the definition of 
misclassification error and sometimes write $e_D(f)$ as shorthand
for $e_D(h_f)$.

A common surrogate loss for multiclass problems is a generalisation
of the binary class hinge loss used for SVMs~\cite{CraSin00}:
\begin{equation}\label{eq:hinge}
    \ell_H(f,y) = \left[1 - M(f,y)\right]_+
\end{equation}
where $[z]_+ = z$ for $z>0$ and is 0 otherwise, and $M(f,y) = f_y -
\max_{y'\ne y}f_{y'}$ is the \emph{margin} for the vector $f\in\RR^k$.
Intuitively, the hinge loss is minimised by models that not only
classify observations correctly but also maximise the difference
between the highest and second highest scores assigned to the
labels. 

While there are other, consistent losses for SVMs~\cite{TewBar07,Liu07}, these cannot scale up to a large $k$. For example, the multiclass hinge loss $\sum_{j\neq y}[1+f_j(x)]_+$ is shown to be 
consistent in \cite{Liu07}. However, it requires evaluating $f$ on all possible labels except the true $y$. This is intractable for labels where the possible assignments grow exponentially. 
The other known and consistent multiclass hinge losses have similar intractability.

%Since the other known and consistent multiclass hinge losses have similar 
%intractability we will only focus on situations where assumptions about the output labels allow the the margin-based loss $\ell_H$ to be
%evaluated quickly using techniques from dynamic programming, linear programming \emph{etc.} \cite{TsoJoaHofAlt05, BakHofSchSmoetal07}.

\subsection{Structured Prediction}
\label{sub:multiclass}
In the multiclass case $\{(x_i,y_i)\}_{i=1}^m$ are assumed i.i.d. However, in many cases, they are not i.i.d. Structured prediction \cite{BakHofSchSmoetal07} can deal with these cases by grouping correlated labels to form a structured label $y$. Here the structured label $y$ can be any object associated with $x$. For example, for the automated paragraph breaking problem, the input $x$ is a document, and the output $y$ is a sequence whose entries denote the beginning positions of the paragraphs. For image segmentation, the input $x$ is an $n_1$ by $n_2$ image, and the structured label $y$ is a 2-D lattice $\{y^{i,j}\}_{i = 1, \cdots, n_1;1= 1, \cdots, n_2}$. The framework of Probabilistic Graphical Models (PGMs) \cite{koller2009probabilistic} provides a principled way of modelling the dependencies of the components of $y$. For a $y$ with $L$ components \ie $y = (y^1,y^2, \cdots, y^L)$,  the graph of PGM $\Gcal = (\Vcal,\Ecal)$ consists of the node set $\Vcal = \{1,\cdots, L\}$ and the edge set $\Ecal$ that reflects the dependencies. Assuming each component $y^j \in \{1,\cdots,c\}$ for all $j$, there are $k = c^L$ many possible assignments for $y$. In other words, such a structured label $y$ can be seen as a multiclass problem with $k$ many classes in theory, although many multiclass algorithms will be intractable in the structured case. 

Once the structured labels are formed, we can assume the structured input-output pairs $\{(x_i,y_i)\}_{i=1}^m$ are i.i.d. from some joint distribution. The predictors $h$ or the models $f$ can be learned in a fashion similar to \eq{eq:erm}. 
The models are usually specified in terms
of a parameter vector $w\in\RR^n$ and a feature map $\phi :
\Xcal\times\Ycal \to \RR^n$ by defining $f_y(x;w) =
\inner{w}{\phi(x,y)}$ and in this case the regulariser is
$\Omega(f) = \frac{\lambda}{2}\|w\|^2$ for some choice of
$\lambda\in\RR$. This is the framework used to implement the SVMs
and CRFs used in the experiments described in Section~\ref{sec:experiments}.
Although much of our analysis does not assume any particular parametric model,
we explicitly discuss the implications of doing so in \S\ref{sec:parametric-fcc}.

\subsection{Probabilistic Interpretation and the Hybrid Loss}
\label{sec:background-pmodel} 
%[\TODO: Brief intro to losses. Make less of a ``laundry list'' (Reviewer 1)]
CRFs are based on a model whereby 
\begin{equation}\label{eq:prob}
    p_y(x;f) = \frac{\exp(f_y(x))}{\sum_{y\in\Ycal}{\exp(f_y(x))}},
\end{equation}
and use the log loss
\begin{equation*}
    \ell_L(p,y) = -\ln p_y.
\end{equation*}
This loss penalises models that assign low probability to likely labels and, implicitly, that assign high probability to
unlikely labels.

We can see that \eq{eq:prob} provides a probabilistic interpretation of the scores of $f_y(x)$.
It is easy to show that under this interpretation the hinge loss for $p = p(\cdot;f)$ is given by
\begin{equation*}
    \ell_H(p,y) = \left[1-\ln\frac{p_y}{\max_{y'\ne y}p_{y'}}\right]_+
\end{equation*}

We now propose a novel \emph{hybrid loss} that is a 
combination of the hinge and log losses
\begin{equation}
    \ell_\alpha(p,y)
    = \alpha\ell_L(p,y) + (1-\alpha)\ell_H(p,y) \label{eq:hybrid}
    % &=& -\alpha\ln(p_y)
    %     + (1-\alpha)\left[1-\ln\frac{p}{\max_{y'\ne
    %     y}p_{y'}}\right]_+\nonumber
\end{equation}
where the mixture of the two losses is controlled by a parameter
$\alpha\in[0,1]$. Setting $\alpha = 1$ or $\alpha = 0$ recovers the
log loss or hinge loss, respectively. The intention is that choosing
$\alpha$ close to 0 will emphasise having the maximum gap between the largest and 
second largest label probabilities while an $\alpha$ close to 1 will
force models to prefer accurate probability assessments over strong
classification.

This family of hybrid losses is similar to a recent proposal 
by Zhang et al \cite{Zhang:2009}. They also define a single parameter family of loss functions
called \emph{coherence functions} that interpolate between hinge loss and a loss that
is closely related to loss based on log-likelihood. Like the loss presented here, their
losses are surrogates for 0-1 loss and both families have the hinge loss as a limit point.
A key difference between the two proposals has to do with the consistency of losses in
each family: the coherence losses are all Fisher consistent for 
probability estimation whereas the hybrid losses satisfy a weaker form of consistency
which we call conditional Fisher Consistency for Classification and analyse below.

%[\TODO: Expand above discussion of Zhang et al's coherence function approach and better explain advantages and key differences of our hybrid approach]
Despite of the properties of the coherence functions, using them in structured cases is intractable. They require the evaluation of a function $\beta_j(x)$ for all classes \ie $  j = 1,\cdots, k$, see Algorithm 1 step 2 (c) in \cite{Zhang:2009}. Note that $k$ grows exponentially in structured cases. They encounter the same problem as other consistent multiclass SVMs. Our hybrid loss does not have this problem.

\section{Fisher Consistency For Classification}\label{sec:consistency}
A desirable property for a loss is that, given enough data, the models
obtained by minimising the loss at each observation will make predictions
that are consistent with the true label probabilities at each observation. We are mainly concerned with distributions 
$D(x)$ over the set $\Ycal$ for some fixed (but irrelevant) $x$. 
We will therefore overload $D$ and use it to denote a distribution 
over $\Ycal$. Whether $D$ represents a distribution over labels or a distribution
over labels and observations should be clear from context.

We say a vector $f\in\RR^{\Ycal}$ is \emph{aligned} with
a distribution $D$ over $\Ycal$ whenever maximisers of $f$ are
also maximisers for $D$. That is, when
\(
    \argmax_{y\in\Ycal} f_y \subseteq \argmax_{y\in\Ycal} D_y.
\)
Since the probabilistic models described in \S\ref{sec:background-pmodel} 
pass the components of a vector $f(x)$ through $\exp$ and rescale, it is clear
that a prediction $f(x)$ is aligned with $D$ if and only if $p_y(x;f)$ is
aligned with $D$.
Because of this correspondence, the following definitions of consistency are
equivalent regardless of whether general models and losses or their 
probabilistic counterparts are used.

If, for all label distributions $D$, minimising the conditional risk 
$L(f,D) = \EE_{y\sim D}[\ell(f,y)]$ for a loss $\ell$ yields a vector $f^*$ aligned
with $D$ we will say $\ell$ is \emph{Fisher consistent for
classification} (FCC)~\footnote{%
    Note that the Fisher consistency for classification is weaker than Fisher 
    consistency for density estimation. The former requires the same prediction 
    only, 
    while the latter requires the estimated density is the same as the true data
    distribution. In this paper, we focus on the former only.
    For an analysis of Fisher consistency for density estimation, we refer
    the reader to \cite{Reid:2009b}.} 
-- or \emph{classification calibrated} \citep{TewBar07}.
This is an important property for losses since it is
related to the asymptotic consistency of the empirical risk
minimiser for that loss \citep[Theorem 2]{TewBar07}.

The standard multiclass hinge loss $\ell_H$ is known to be inconsistent for
classification when there are more than two
classes~\citep{Liu07,TewBar07}. The analysis in \citep{Liu07} shows
that the hinge loss is inconsistent whenever there is an instance $x$ with
a \emph{non-dominant} distribution -- that is, $D_y(x) <
1/2$ for all $y\in\Ycal$. Conversely, a distribution is
\emph{dominant} for an instance $x$ if there is some $y$ with
$D_y(x) \ge 1/2$.
In contrast, the log loss used to train non-parametric CRFs is
Fisher consistent for probability estimation -- that is, the
associated risk is minimised by the true conditional distribution --
and thus $\ell_C$ is FCC since the minimising distribution is equal
to $D(x)$ and thus aligned with $D(x)$. 

\subsection{Conditional Consistency of the Hybrid Loss}
\label{sec:cfcc}

In order to analyse the consistency of the hybrid loss we require the 
following refined notion of Fisher consistency. 
If $D=(D_1,\ldots,D_k)$ is a
(conditional) distribution over the labels $\Ycal$ then we say the loss 
$\ell$ is \emph{conditionally FCC with respect to $D$} whenever minimising the
conditional risk w.r.t. $D$, $L(f,D) = \EE_{y\sim
D}\left[\ell(f,y)\right]$ yields a predictor $f^*$ that is
aligned with $D$. Of course, if a loss $\ell$ is conditionally
FCC w.r.t. $D$ for \emph{all} $D$ it is, by definition,
(unconditionally) FCC.

The following theorem provides sufficient conditions on the hybrid parameter
$\alpha$ in terms of a label distribution $D$ so that the hybrid loss 
$\ell_\alpha$ is conditionally FCC w.r.t. $D$.
\begin{theorem}\label{thm:fcc}
    Let $D=(D_1,\ldots,D_k)$ be a distribution over $\Ycal$ and let
    $D_{max} := \max_y D_y$ be the largest probability assigned to any 
    $y\in\Ycal$. Also let $\Ycal_{max} := \{ y : D_y = D_{max} \}$ be the set of 
    labels with maximal probability 
    and $D_{next} := \max_{y\notin \Ycal_{max}} D_y$ be
	the second largest probability assigned to a label, or $D_{next} = \infty$
	if $\Ycal_{max} = \Ycal$.
    Then the hybrid loss $\ell_\alpha$ is conditionally FCC for $D$ whenever
    $D_{max} \ge \frac{1}{2}$ or
    \begin{equation}\label{eq:alpha}
        \alpha > 1 - \frac{D_{max} - D_{next}}{1-2D_{max}}.
    \end{equation}
\end{theorem}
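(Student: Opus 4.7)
The plan is to argue by contradiction against a non-aligned minimiser $f^*$ of the conditional risk $L(f,D) = \alpha L_L(f,D) + (1-\alpha) L_H(f,D)$, where $L_L(f,D) = \EE_{y\sim D}[\ell_L(p(\cdot;f),y)]$ and $L_H(f,D) = \EE_{y\sim D}[\ell_H(p(\cdot;f),y)]$. Suppose $\argmax_y f^*_y \not\subseteq \Ycal_{max}$, pick $y^* \in \argmax f^*$ with $y^*\notin\Ycal_{max}$ (so $D_{y^*}\le D_{next}$) and any $y^{**}\in\Ycal_{max}$ (so $D_{y^{**}}=D_{max}$), and split on how $y^{**}$ sits relative to $\argmax f^*$.

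In the first case, there is some $y^{**}\in\Ycal_{max}$ with $f^*_{y^{**}}<f^*_{y^*}$ strictly. I form $\tilde f$ by exchanging the values $f^*_{y^*}$ and $f^*_{y^{**}}$. Using $L_L(f)=-\sum_y D_y f_y+\ln Z(f)$ and the permutation-invariance of $Z$, one gets $L_L(\tilde f)-L_L(f^*)=(D_{max}-D_{y^*})(f^*_{y^{**}}-f^*_{y^*})<0$. For the hinge part, only the terms indexed by $y^*$ and $y^{**}$ change (the inner $\max$ for every other $y$ remains $\max f^*$); a direct bookkeeping collapses the difference to $(D_{y^*}-D_{max})\cdot(\text{non-negative margin quantity})$, which is $\le 0$. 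The hybrid risk therefore strictly decreases, contradicting optimality, and this step requires no condition on $\alpha$.

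In the second case, $\Ycal_{max}\subseteq\argmax f^*$, so every candidate $y^{**}$ is tied with $y^*$ at the top and the swap is inert. I then consider the directional perturbation $v$ with $v_{y^*}=-1$, $v_{y^{**}}=+1$, and $v_y=0$ otherwise, and compute the one-sided derivative $D_v L(f^*)=\alpha D_v L_L(f^*)+(1-\alpha)D_v L_H(f^*)$. The log part is smooth; because the softmax values of all tied maximisers agree, the $p_y$-terms cancel and $D_v L_L(f^*)=D_{y^*}-D_{max}$. For the hinge part every inner argument is $\ge 1>0$ at $f^*$, so non-smoothness enters only through the inner $\max$, which obeys $D_v\max(g)=\max_{y'\in\argmax g}v_{y'}$. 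Enumerating the four sub-cases $y=y^*$, $y=y^{**}$, $y\in\argmax f^*\setminus\{y^*,y^{**}\}$, $y\notin\argmax f^*$ at $|\argmax f^*|=2$ collapses to $D_v L_H(f^*)=D_{y^*}-3D_{max}+1$. Combining and using $D_{y^*}\le D_{next}$ rearranges optimality $D_v L(f^*)\ge 0$ into
\[
    D_{max}-D_{next} \le (1-\alpha)(1-2D_{max}),
\]
which contradicts either $D_{max}\ge 1/2$ (the RHS is $\le 0$ while the LHS is $>0$) or the hypothesis $\alpha>1-(D_{max}-D_{next})/(1-2D_{max})$, completing the proof.

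The main technical obstacle is the case-II subgradient analysis when $|\argmax f^*|>2$: the bare direction $v=-e_{y^*}+e_{y^{**}}$ no longer yields the tight bound because the directional derivative of the inner $\max$ then picks a single coordinate rather than averaging. I expect the cleanest resolution is to use a weighted perturbation that redistributes the $+1$ evenly across $\Ycal_{max}$, so that the $\max$-derivatives become arithmetic means, or alternatively to reduce to the $|\argmax f^*|=2$ sub-case by showing the excess ties at the top can be broken at no cost to $L$. The Case-I argument itself also warrants a routine continuity check when the swapped labels are only weakly less than $\max f^*$.
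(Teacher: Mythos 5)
Your strategy is the same as the paper's: argue by contradiction against a non-aligned minimiser, handle the ``strictly separated'' case by a value swap that needs no condition on $\alpha$, and handle the ``tied at the top'' case by an infinitesimal perturbation whose first-order analysis produces exactly the threshold \eqref{eq:alpha}. The only real difference is presentational: you work in score space with one-sided directional derivatives, whereas the paper works in probability space with an explicit $\epsilon$-perturbation and the elementary bound $-\ln x > 1-x$; these are equivalent up to the softmax reparametrisation the paper itself notes. Your computations check out: in Case I the log difference is $(D_{max}-D_{y^*})(f^*_{y^{**}}-f^*_{y^*})<0$ and the hinge difference collapses to $(D_{y^*}-D_{max})\bigl(\ell_H(f^*,y^{**})-\ell_H(f^*,y^*)\bigr)$ --- note that the second factor is in fact \emph{strictly} positive (the first term exceeds $1$, the second is at most $1$), which you need in order to get a strict decrease when $\alpha=0$; your stated ``$\le 0$'' is slightly too weak there. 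In Case II with two tied maximisers, $D_vL_L=D_{y^*}-D_{max}$ and $D_vL_H=D_{y^*}-3D_{max}+1$ are both correct, and optimality $D_vL\ge 0$ rearranges to $D_{max}-D_{next}\le(1-\alpha)(1-2D_{max})$, which is precisely the paper's final inequality.

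The obstacle you flag --- more than two labels tied at the top of $f^*$ --- is genuine, but you should know that the paper's own proof silently suffers from the same lacuna: its Case 2 asserts $p_t>p_y$ for all $y\ne t,y^*$ and uses $\max_{y'\ne y^*}q_{y'}=q_t$, both of which fail as soon as a third label ties for the maximum of $p$, and in that situation the hinge decrease for the favoured label degrades from $2\epsilon/p_{y^*}$ to $\epsilon/p_{y^*}$, exactly the degradation you identify. So your proposal is at parity with the published argument and more transparent about where the difficulty sits. Be aware, however, that neither of your suggested repairs is obviously sufficient as stated: redistributing the $+1$ over $\Ycal_{max}$ still leaves the $y^{**}$-term degraded whenever there are extra tied labels \emph{outside} $\Ycal_{max}$ with $v=0$, and a direct computation with the ``decrease all bad tied labels'' direction yields a condition strictly weaker than \eqref{eq:alpha} when there are two or more such labels. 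Closing that sub-case therefore needs a new idea (or a preliminary reduction showing a minimiser cannot have excess ties), not just bookkeeping.
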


The proof is by contradiction and proceeds at a high level by showing that if the
distribution $D$ satisfies $D_{max} \ge \frac{1}{2}$ or \eqref{eq:alpha} but the
minimiser $p$ of the risk $L_\alpha(p,D)$ is not aligned with $D$ we derive a 
falsehood.
The argument is broken into two cases: when the risk minimising distribution $p$ 
has a unique maximum probability and when it does not. In both cases we show how to
construct an alternative distribution $q$ (that depends on $D$) such that
$L_\alpha(q,D) < L(p,D)$, yielding the required contradiction.
In the first case, $q$ is obtained by swapping the most probable label of $p$ with
that of $D$. In the second case (when $p$ has two or more most probable labels),
$q$ is obtained by perturbing $p$ slightly towards $D$.

\begin{proof}
    Since we are free to permute the labels within $\Ycal$, we will assume without loss
    of generality that there are $t$ ties for the most probable label and that 
    $\Ycal_{max} = \{1, \ldots, t\}$ and so $D_1=\cdots=D_t$.
    Defining $L_\alpha(p,D) = \EE_{y\sim D}\left[\ell_\alpha(p,y)\right]$,    
    the proof now proceeds by contradiction by assuming that there is some minimiser
    $p = \argmin_q L_\alpha(q,D)$ that is not aligned with
    $D$.
    For this to occur there must be some label $y^*>t$ such that $p_{y^*}$ 
    is as least as large as $p_1, \ldots, p_t$. 
    For simplicity, and again without loss of generality, we will assume that
    $y^*$ is the label with the largest probability according 
    to $p$ (that is, $y^* \in \argmax_y p_y$).
    We are also free to have permuted labels within $\Ycal_{max}$ to ensure 
    $t \in \argmax_{y\in\Ycal_{max}} p_y$.

    The first case to consider is when $p_{y^*}$ is strictly larger than
    $p_t$.
    Here we construct a new distribution $q$ that swaps the 
    values of $p_t$ and $p_{y^*}$ and leaves all the other values unchanged.
    That is, $q_{t} = p_{y^*}$, $q_{y^*} = p_t$ and $q_y = p_y$ for all 
    $y\in\Ycal-\{t,y^*\}$.
    Intuitively, we will now show that this new point is ``closer'' to $D$ and 
    therefore the CRF component of the loss will be reduced while the SVM component of the loss 
    won't increase.
    To do so, we consider the difference in conditional risks:
    \begin{eqnarray}
        L_\alpha(p,D) - L_\alpha(q,D)
        &=& \sum_{y=1}^k D_y.(\ell_\alpha(p,y) - \ell_\alpha(q,y)) \nonumber \\
        &=& D_{t}.(\ell_\alpha(p,t) - \ell_\alpha(q,t)) \nonumber\\
        &&  + D_{y^*}.(\ell_\alpha(p,y^*) - \ell_\alpha(q,y^*)) \nonumber \\
        &=& (D_{t} - D_{y^*})(\ell_\alpha(q,y^*) - \ell_\alpha(q,t)) \nonumber
    \end{eqnarray}
    since $\ell_\alpha(p,t)=\ell_\alpha(q,y^*)$ and
    $\ell_\alpha(p,y^*) = \ell_\alpha(q,t)$ and the other terms cancel by
    construction.
    Since, by assumption $y^* > t$, we have $D_t - D_{y^*} > 0$, so all that 
    is required now is to show that
    $\ell_\alpha(q,y^*) - \ell_\alpha(q,t) = \alpha\ln\frac{q_t}{q_{y^*}}
    +(1-\alpha)(\ell_H(q,y^*)-\ell_H(q,t))$
    is strictly positive.

    Since $p_{y^*} = q_t > q_y$ for $y\ne t$ we have $\ln\frac{q_t}{q_{y^*}} > 0$,
    $\ell_H(q,y^*) = \left[1-\ln\frac{q_{y^*}}{q_t}\right]_+ > 1$, and
    $\ell_H(q,t) = \left[1-\ln\frac{q_t}{q_{y^*}}\right]_+ < 1$, and so
    $\ell_H(q,y^*) - \ell_H(q,t) > 1 - 1 = 0$.
    Thus, $\ell_\alpha(q,y^*) - \ell_\alpha(q,t) > 0$ as required.
    This gives a contradiction and thus establishes the theorem in the case where
    $p_{y^*} > p_t$.

    Now suppose that $p_{y^*} = p_t$. That is, there is a tie for the maximum
    probability label in $p$ and at least one of these maximising labels 
    coincides with the maximising labels of $D$.
    In this case we show that a slight perturbation of $p$ yields a new distribution
    $q$ with a strictly smaller loss.
    To define $q$ we let $\epsilon>0$ and set 
    $q_{y^*} = p_{y^*} + \epsilon$, $q_t = p_t - \epsilon$, and $q_y = p_y$ for all
    other $y \ne t, y^*$.
    Now, for $y\ne t, y^*$ we have $\ell_L(p,y) - \ell_L(q,y) = 0$. 
    Also for $y\ne t, y^*$ we have $p_t > p_y$ and $q_{y^*} > q_y$ thus
    $\ell_H(p,y) - \ell_H(q,y)
    = 1-\ln\frac{p_y}{p_t} - \left(1 -\ln\frac{q_y}{q_{y^*}}\right)
    = \ln\frac{p_t}{q_{y^*}}
    > 1-\frac{q_{y^*}}{p_t}
    = -\frac{\epsilon}{p_t}$
    since $-\ln x > 1 - x$ for $x > 0$ and
    $q_{y^*} = p_{y^*} + \epsilon = p_t + \epsilon$.
    By substituting this inequality into the definition of $\ell_\alpha$, we see
    that for all $y \ne t,y^*$ 
    \begin{equation}\label{eq:yne12}
        \ell_\alpha(p,y) - \ell_\alpha(q,y) > -\epsilon\frac{(1-\alpha)}{p_1}.
    \end{equation}

    For the label $y^*$ we see that the log loss component of $\ell_\alpha$ 
    satisfies
    $\ell_L(p,y^*) - \ell_L(q,y^*) = -\ln\frac{p_{y^*}}{q_{y^*}} 
    > \frac{q_{y^*} - p_{y^*}}{q_{y^*}} = \frac{\epsilon}{q_{y^*}}$ and the 
    difference between the hinge loss components becomes
    $\ell_H(p,y^*)-\ell_H(q,y^*) 
    = (1-\ln\frac{p_{y^*}}{p_t}) - (1-\ln\frac{q_{y^*}}{q_t}) =
    \ln\frac{q_{y^*}}{q_t} = \ln\frac{p_{y^*}+\epsilon}{p_{y^*}-\epsilon}$ since
    $p_{y^*} = p_t$.
    Thus $\ell_H(p,y^*)-\ell_H(q,y^*) 
    > 1 - \frac{p_{y^*} - \epsilon}{p_{y^*} + \epsilon}
    = \frac{2\epsilon}{p_{y^*}+\epsilon}$.
    And so
    \begin{align}
        \ell_\alpha(p,y^*) - \ell_\alpha(q,y^*)
        & > \epsilon\left[\frac{\alpha}{p_{y^*}+\epsilon}
            + \frac{2(1-\alpha)}{p_{y^*}+\epsilon}\right] \notag\\
        &= \epsilon\frac{2-\alpha}{p_{y^*}+\epsilon} 
         > \epsilon\frac{2-\alpha}{p_{y^*}} \label{eq:y1}
    \end{align}
    since $\epsilon>0$ and $\alpha\le 1$.
    Finally, for the label $t$ we have
    $\ell_L(p,t)-\ell_L(q,t) = -\ln\frac{p_t}{q_t}
    > \frac{q_t - p_t}{q_t} 
    = \frac{-\epsilon}{q_t} = \frac{-\epsilon}{p_{y^*}-\epsilon}$ since
    $q_t = p_t - \epsilon$ and $p_t = p_{y^*}$. Similarly,
    $\ell_H(p,t) - \ell_H(q,t) = (1-\ln\frac{p_t}{p_t})-(1-\ln\frac{q_t}{q_t})
    = \ln\frac{q_t}{q_{y^*}} > 1 - \frac{q_{y^*}}{q_t} =
    \frac{-2\epsilon}{p_{y^*}-\epsilon}$.
    Thus,
    \begin{align}
        \ell_\alpha(p,t) - \ell_\alpha(q,t)
        &> -\epsilon\left[
            \frac{\alpha}{p_{y^*}-\epsilon} + \frac{2(1-\alpha)}{p_{y^*}-\epsilon}
        \right] \notag\\
        &= -\epsilon\frac{2-\alpha}{p_{y^*}-\epsilon}
         > -\epsilon\frac{2-\alpha}{p_{y^*}} \label{eq:y2}
    \end{align}

    Putting the inequalities (\ref{eq:yne12}), (\ref{eq:y1}) and (\ref{eq:y2})
    together yields
    \begin{eqnarray*}
        \lefteqn{L_\alpha(p,D) - L_\alpha(q,D)} \\
        &>& D_{y^*} \epsilon \left[\frac{2-\alpha}{p_{y^*}}\right]
            - D_t \epsilon \left[\frac{2-\alpha}{p_{y^*}}\right]
            - \sum_{y\ne y^*,t} D_y \epsilon \frac{1-\alpha}{p_{y^*}} \\
        &=& \frac{\epsilon}{p_{y^*}}
            \left[
                (D_{y^*}-D_t)(2-\alpha) - (1-D_{y^*}-D_t)(1-\alpha)
            \right] \\
        &=& \frac{\epsilon}{p_{y^*}}\left[D_{y^*} - D_t + (1-\alpha)(2D_{y^*} - 1)\right].
    \end{eqnarray*}
    Since $D_{y^*} > D_t$,
    when $D_{y^*} \ge \frac{1}{2}$ the final term is non-negative without any 
	additional constraint on $\alpha\in[0,1]$ and since $D_{y^*} > D_t$, the
	difference in risks is thus positive.
    When $D_{y^*} < \frac{1}{2}$ the difference in risks is positive
    whenever
    \begin{equation}
        \alpha > 1 - \frac{D_{y^*} - D_t}{1 - 2D_{y^*}}.
    \end{equation}
    Observing that $D_{max} = D_{y^*}$ and $D_{next} = D_t$
    completes the proof.
\end{proof}

Theorem~\ref{thm:fcc} can be inverted and interpreted as a
constraint on the conditional distributions of some data distribution $D$ such that a 
hybrid loss with parameter $\alpha$ will yield consistent predictions. Specifically,
the hybrid loss will be consistent if, for all $x\in\Xcal$ such that
$q = D(x)$ has no dominant label (\emph{i.e.}, $D_y(x) \le
\frac{1}{2}$ for all $y\in\Ycal$), the gap $D_{y_1}(x) -
D_{y_2}(x)$ between the top two probabilities is larger than
$(1-\alpha)(1-2D_{y_1}(x))$. When this is not the case for some $x$,
the classification problem for that instance is, in some sense, too
difficult to disambiguate. In this sense, the bound can be seen as a property
on distributions akin to Tsybakov's noise condition 
\citep{Che06}. Both conditions are non-constructive as they depend on the 
unknown distribution but provide some guidance as to the effect of parameter
choices (i.e., $\alpha$ for the hybrid loss and regularisation constants for SVMs).
Exploring the relationship between conditional FCC and the Tsybakov noise condition
is the focus of ongoing work.

\subsection{Parametric Consistency}\label{sec:parametric-fcc}

Since Fisher consistency is defined point-wise on observations, it is not directly
applicable to parametric models as these enforce inter-observational constraints 
(\emph{e.g.} smoothness).
% One can argue that when $w$ doesn't enforce any restriction on $f$ (e.g. $w$ is in a 
% RKHS with infinite dimensionality), the Fisher consistency analysis holds. However,
% when $w$ has low dimensionality, the classical Fisher consistency
% analysis no longer holds. 
Abstractly, assuming parametric hypotheses can be seen as a restriction over
the space of allowable scoring functions. When learning parametric models, risks
are minimised over some subset $\Fcal$ of functions from $\Xcal \to \RR^{\Ycal}$
instead of all possible functions.
We now show that, given some weak assumptions on the hypothesis class $\Fcal$, a 
loss being FCC is a necessary condition if the loss is also to be 
$\Fcal$-consistent.

We say a loss $\ell$ is $\Fcal$-consistent if, for any distribution, minimising its
associated risk over $\Fcal$ yields a classifier with minimal
0-1 loss in $\Fcal$.\footnote{%
    While this is simpler and stronger than the usual asymptotic notation of 
    consistency \cite{LugVay04} it most readily relates to FCC and suffices for our 
    discussion since we are only establishing that FCC is a necessary condition.}
Recall from Section~\ref{sub:multiclass} that the risk of a hypothesis $f\in\Fcal$ associated with a loss $\ell$ and
distribution $D$ over $\Xcal\times\Ycal$ is $L_D(f) = \EE_D\left[\ell(y,f(x))\right]$
and the 0-1 risk or misclassification error for $f$ is
$e_D(f) = \PP_D\left[h(x) \neq y \right]$ where $h$ is some 
classifier deterministically derived by some tie-breaking procedure on $f$.
More precisely, we will say a \emph{tie-breaker} $T$ is a function from the power set
of $\Ycal$ to $\Ycal$ that guarantees $T(Y) \in Y$ for all non-empty $Y \subset \Ycal$. 
Finally, we define $h_f(x) = T(\argmax_{y\in\Ycal} f_y(x))$ to be
the classifier derived from $f$ using $T$. 

Given a function class $\Fcal$ we say \emph{$\ell$ is $\Fcal$-consistent} if, for all
distributions $D$ and all tie-breakers $T$ defining the classifiers $h_f$,
\begin{equation}\label{eq:F-consistency}
    L_D(f^*) = \inf_{f\in\Fcal} L_D(f)
    \implies
	e_D(h_{f^*}) = \inf_{f\in\Fcal} e_D(h_f).
\end{equation}

% \paragraph{Regular Function Classes}

We need a relatively weak condition on function classes $\Fcal$ to state our theorem.
We say a class $\Fcal$ is \emph{regular} if the follow two properties hold: 1) For any
$g\in\RR^{\Ycal}$ there exists an $x\in\Xcal$ and an $f\in\Fcal$ so that $f(x) = g$;
and 2) For any $x\in\Xcal$ and $y\in\Ycal$ there exists an $f\in\Fcal$ so that there
is a \emph{unique} $y \in \Ycal$ which maximises $f_{y}(x)$.

Intuitively, the first condition says that for any distribution over labels
there must be a function in the class which models it perfectly on some point in the
input space.
The second condition requires that any mode can be modelled on any input by a function
that has no ties for its maximum value.
Importantly, these properties are fairly weak in that they do not say anything about the
constraints a function class might put on relationships between distributions modelled
on different inputs.

% [\TODO: Argue that the usual parametric function classes used in CRF (e.g., parameter
% vectors in a ball of radius $\lambda$. One concern is that when the number of labels
% is much larger than the dimension of the parameter space can we still assume regularity
% property 1 holds?] \textcolor{blue}{Javen: I don't see why larger parameter space harms property 1. It only helps, doesn't it?}

\begin{theorem}\label{thm:fcc-necessary}
    For regular function classes $\Fcal$ any loss that is $\Fcal$-consistent is
    necessarily also Fisher Consistent for Classification (FCC).
\end{theorem}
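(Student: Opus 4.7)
The plan is to prove the contrapositive: if $\ell$ fails FCC, then $\ell$ fails $\Fcal$-consistency. Unpacking ``not FCC'' yields a label distribution $D^*$ over $\Ycal$ and a minimiser $g \in \RR^{\Ycal}$ of the conditional risk $L(\cdot, D^*)$ whose $\argmax$ set is not contained in $\argmax_y D^*_y$; in particular there exists $y' \in \argmax_y g_y$ with $D^*_{y'} < D^*_{max}$. The key idea is to embed this pointwise failure into a joint distribution supported at a single input so that minimising the $\Fcal$-risk reduces to the pointwise problem that $g$ already solves badly.

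For the construction, I would use the first regularity condition to pick $x^* \in \Xcal$ and $f^* \in \Fcal$ with $f^*(x^*) = g$, and let $D$ be the joint distribution on $\Xcal\times\Ycal$ that places all its mass at $x^*$ with conditional $D(\cdot\mid x^*) = D^*$. Then for every $f \in \Fcal$ one has $L_D(f) = L(f(x^*), D^*) \ge \inf_{v\in\RR^{\Ycal}} L(v, D^*) = L(g, D^*) = L_D(f^*)$, so $f^*$ is an $\Fcal$-risk minimiser under $D$.

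Next I would compare 0-1 risks. Applying the second regularity condition to $x^*$ and any $y^* \in \argmax_y D^*_y$ produces some $f \in \Fcal$ for which $y^*$ is the unique maximiser of $f_y(x^*)$; then $h_f(x^*) = y^*$ regardless of the tie-breaker, giving $e_D(h_f) = 1 - D^*_{max}$ and hence $\inf_{f\in\Fcal} e_D(h_f) = 1 - D^*_{max}$. For $f^*$ itself I would exploit the ``for all tie-breakers'' quantifier in (\ref{eq:F-consistency}): since $y' \in \argmax g = \argmax_y f^*_y(x^*)$, the tie-breaker $T$ with $T(\argmax g) = y'$ is admissible, and with this $T$ we obtain $h_{f^*}(x^*) = y'$ and $e_D(h_{f^*}) = 1 - D^*_{y'} > 1 - D^*_{max}$, contradicting (\ref{eq:F-consistency}).

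The main obstacle is conceptual rather than computational: one must choose the embedding so that the pointwise FCC failure survives the restriction to $\Fcal$. The two regularity conditions are tailor-made for this step -- the first guarantees that the bad unconstrained minimiser $g$ is realisable as $f(x^*)$ for some pair $(x^*, f) \in \Xcal\times\Fcal$, and the second guarantees that the Bayes-optimal classifier on that same $x^*$ is realisable in $\Fcal$, so the optimal $\Fcal$-risk on the constructed $D$ is genuinely $1 - D^*_{max}$ and the comparison with $h_{f^*}$ is meaningful. One small subtlety worth flagging: the argument does not require $\{f(x^*) : f \in \Fcal\}$ to be all of $\RR^{\Ycal}$; it only needs $g$ to lie in this set, which the first regularity condition directly provides.
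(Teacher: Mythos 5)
Your proposal is correct and follows essentially the same route as the paper's proof: embed the FCC failure into a point-mass joint distribution at an $x^*$ realising $g$ via the first regularity condition, exploit the universal quantifier over tie-breakers to make $h_{f^*}$ predict the bad label $y'$, and use the second regularity condition to exhibit a competitor in $\Fcal$ achieving the Bayes error $1 - D^*_{max}$, yielding the contradiction. Your explicit remark that $L_D(f) \ge \inf_{v\in\RR^{\Ycal}} L(v,D^*) = L_D(f^*)$ because $g$ is an \emph{unconstrained} minimiser is a slightly more careful justification of the step the paper states more tersely.
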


\begin{proof}
    The proof is by contradiction.
    We assume we have a regular function class $\Fcal$ and a loss $\ell$ which is
    $\Fcal$-consistent but not FCC.
    That is, (\ref{eq:F-consistency}) holds for $\ell$ but there exists a
    distribution $p$ over $\Ycal$
    such that there is a $g \in \RR^{\Ycal}$ which minimises the conditional risk
	$L_{p}(g)$ but $g$ is not aligned with $p$ 
	(i.e., $\argmax_{y\in\Ycal} g_y \not\subset \argmax_{y\in\Ycal} p_y$).

    By the assumption of the regularity of $\Fcal$, property 1 means there is 
	an $x\in\Xcal$ and a
    $f\in\Fcal$ so that $f(x) = g$.
    We now define a distribution $D$ over $\Xcal\times\Ycal$ that puts all its mass
    on the set $\{x\}\times\Ycal$ so that $D(x,y) = p_y$.
    Since this distribution is concentrated on a single $x$ its full risk and
    conditional risk on $x$ are the same. That is, $L_D(\cdot) = L_p(\cdot)$.
    Thus,
    \[
        L_D(f) = L_p(f) = \inf_{f'\in\Fcal} L_p(f') = \inf_{f'\in\Fcal} L_D(f')
    \]
    By the assumption of $\Fcal$-consistency, since $f$ is a minimiser of $L_D$ the
	classifier $h_f$ must also minimise $e_D$ for any choice of tie-breaker $T$
	used to define $h_f$.
	Because $g = f(x)$, the construction of $D$ implies that
	$e_D(h_f)   = e_p(h_g) 
	 			= \PP_{y\sim p}\left[y \neq h_{g}(x)\right] 
				= 1 - p_{y_g}$ 
	where $y_g = h_g(x)$ is the label predicted by $h_g$.
	However, since $g$ is not aligned with $p$ by assumption and 
	\eqref{eq:F-consistency} holds for any $T$, we are free to 
	choose the tie-breaker $T$ defining $h_g$ so that 
	$h_g(x) = T(\argmax_{y} g_y(x)) \notin \argmax_y p_y$. 
	Thus
	\begin{equation}\label{eq:eDbigger}
        e_D(h_f) = e_p(h_g) = 1-p_{y_g} > 1 - p_{y^*}
	\end{equation}
    since $y_g \neq y_* \in \argmax_y p_y$.

    By the second regularity property, there must also be an $\hat{f}\in\Fcal$ such
    that $y^*$ is the unique maximiser of $\hat{f}_y(x)$ for $y \in \Ycal$.
	Since $y^*$ is a unique maximiser, any choice of tie-breaker $T$ will result
	in a classifier satisfying $h_{\hat{f}}(x) = y^*$ as any $T$ must guarantee
	$T(\{ y \}) \in \{ y \}$ for all $y \in \Ycal$.
	Therefore, we arrive at the contradiction
	\[
		1 - p_{y^*} = e_D(h_{\hat{f}}) \geq e_D(h_f) > 1 - p_{y^*}
	\]
	since $h_f$ is a minimiser of $e_D$ and $e_D(h_f) > 1 - p_{y^*}$ by 
	\eqref{eq:eDbigger}.
    Thus, we have shown that there exists a distribution $D$ so $f\in\Fcal$ is a
    minimiser of the risk $L_D$ but $h_f$ is not a minimiser of the
    misclassification rate $e_D$, contradicting the assumption of the
    $\Fcal$-consistency of $\ell$. Therefore, $\ell$ must be FCC.

\end{proof}

The above analysis of the hybrid loss suggests it should outperform the hinge loss due to its improved consistency on distributions with 
non-dominant labels. Furthermore, it should also make more efficient use of 
data than log loss on distributions with dominant labels.
These hypotheses are confirmed in the next section by applying the hybrid, log and 
hinge losses to a 
number of synthetic multiclass data sets in which the data set size and proportion 
of examples with non-dominant labels are carefully controlled. 
% As might be expected,
% we observe that the hybrid loss outperforms the hinge loss when
% there are many instances with non-dominant labels and outperforms
% the log loss when relatively few training examples are available and
% most of those have dominant labels.

We also compare the hybrid loss with the log and hinge losses on
several real structured estimation problems and observe that the hybrid loss
regularly outperforms the other losses and consistently performs at least as well
as either of the other losses on any problem.

\section{Multiclass Classification}
Two types of multiclass simulations were performed. The first examined the 
performances of the hybrid, log and hinge losses when no observation had a 
dominant label. That is all observations were drawn from a $D$ with $D_y(x) <1/2$ 
for all labels $y$. The second experiment considered distributions with a
controlled mixture of observations with dominant and non-dominant labels.

\subsection{Non-dominant Distributions} 

\begin{figure}[tb]
 \centering
     \includegraphics[width=0.9\linewidth]{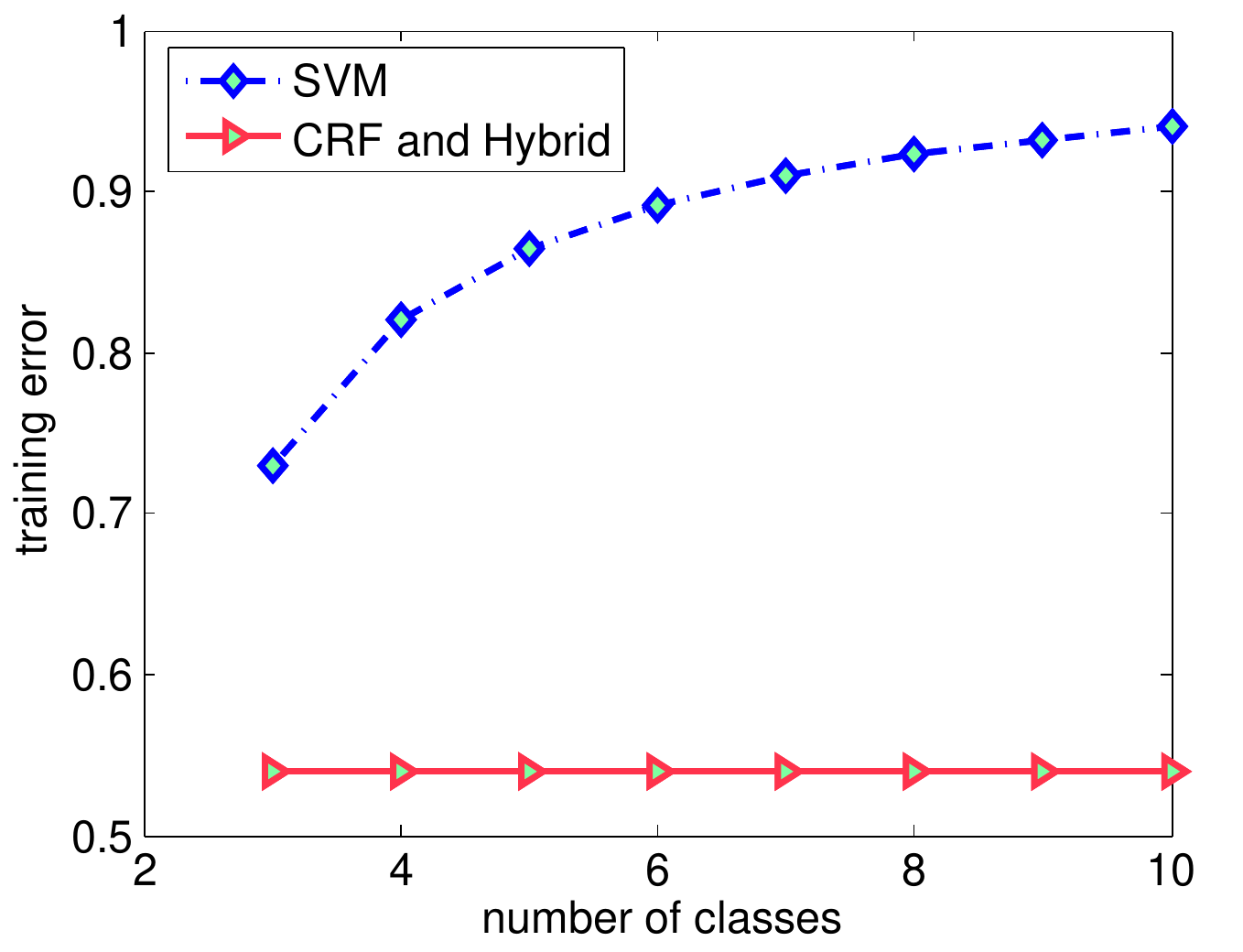}
  \caption{Training Error with various number of classes. $\alpha =
  0.5$ for the hybrid loss. Fisher consistency analyses the behaviour of a loss observing the entire data population. Thus the training data are the entire data, so are the testing data. Consequently, the training error is the testing error.
} \label{fig:multiclass}
\end{figure}

\begin{figure*}[tb]
 \centering
 \subfigure[Hybrid v.s. Hinge (31/15)]{
     \includegraphics[width=0.3\linewidth]{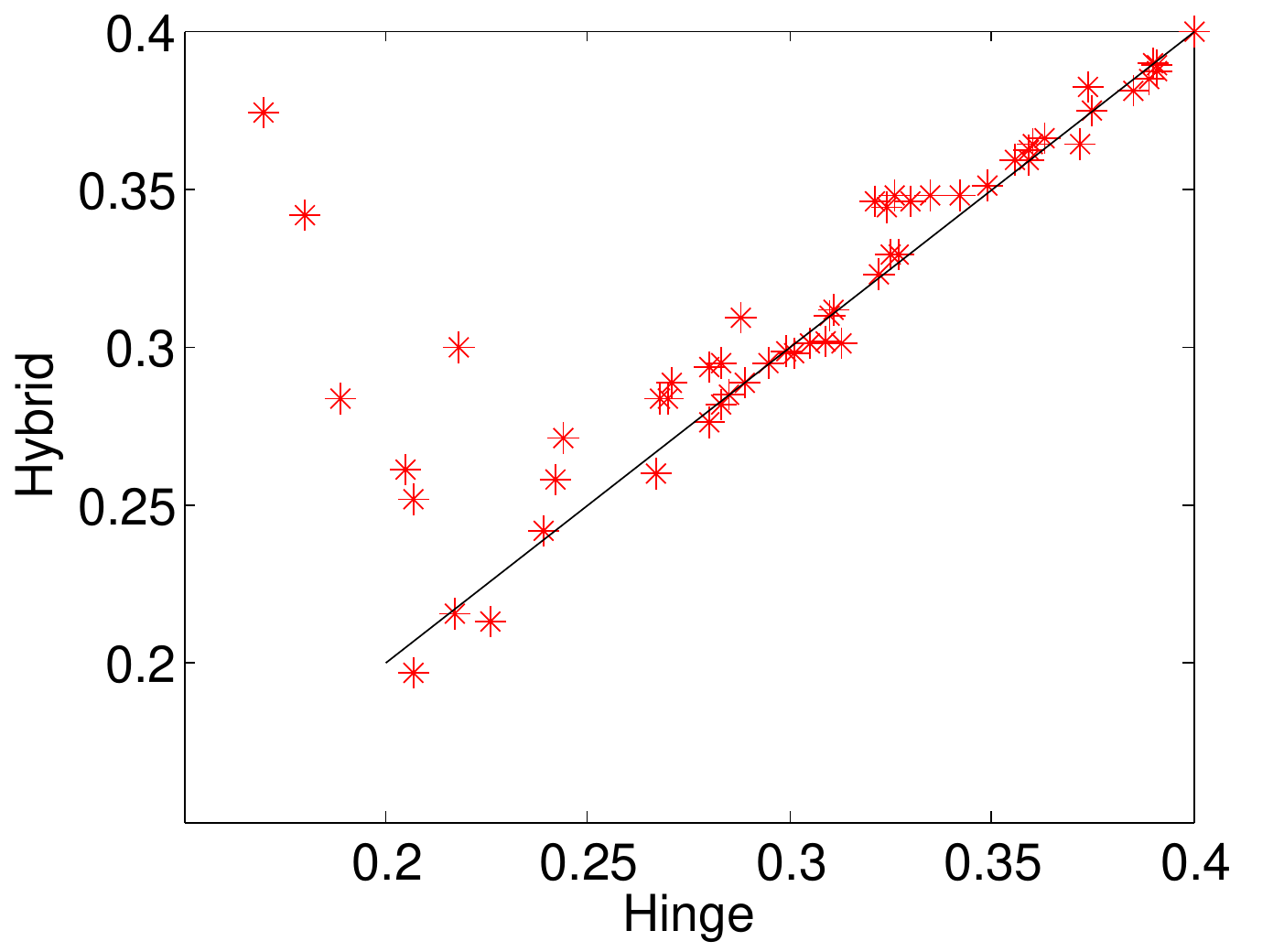}
     \label{fig:hybrid_svm}
     }
      \subfigure[Hybrid v.s. Log (34/15)]{
     \includegraphics[width=0.3\linewidth]{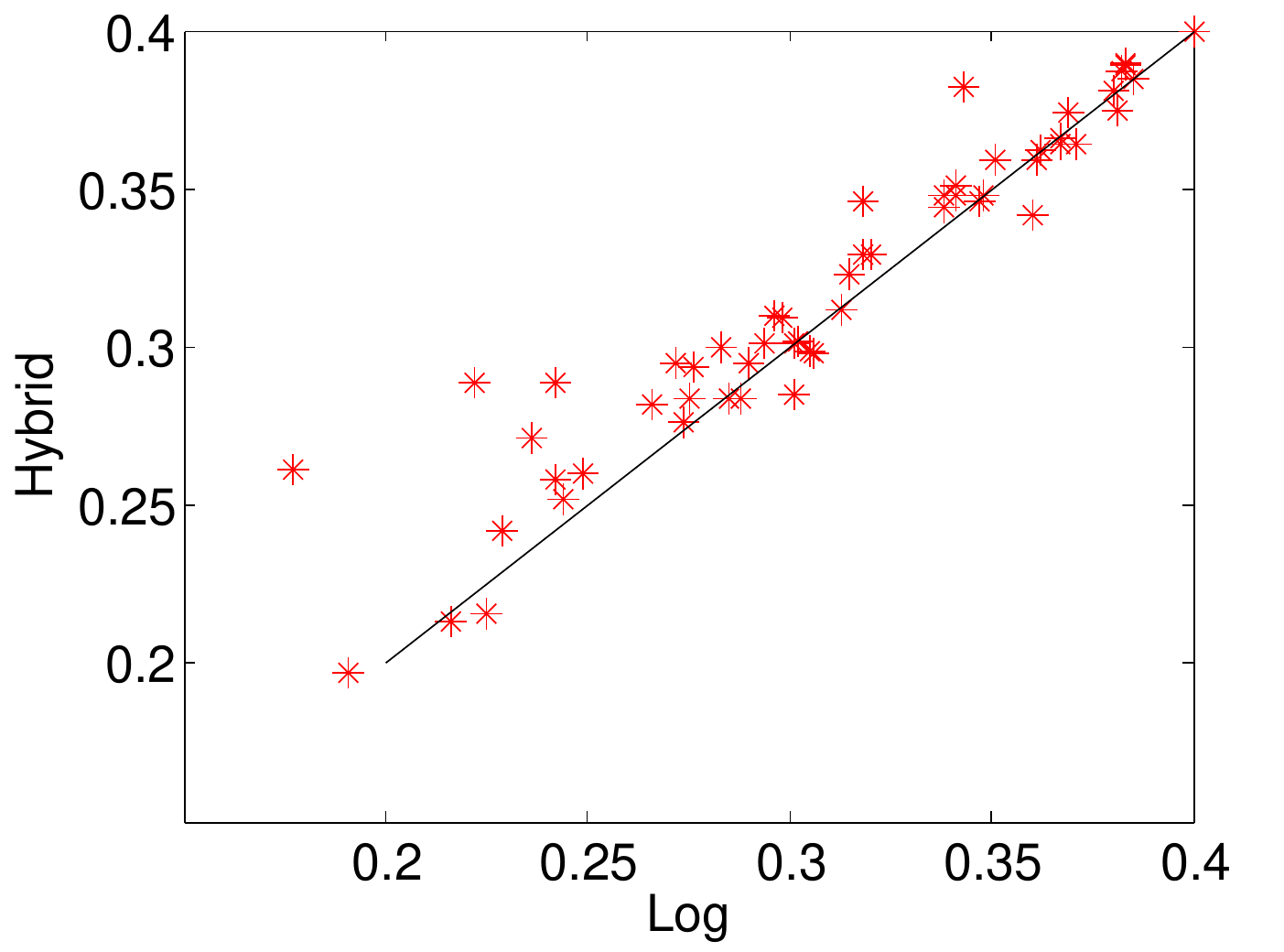}
     \label{fig:hybrid_crf}
     }
      \subfigure[Hinge v.s. Log (30/23)]{
     \includegraphics[width=0.3\linewidth]{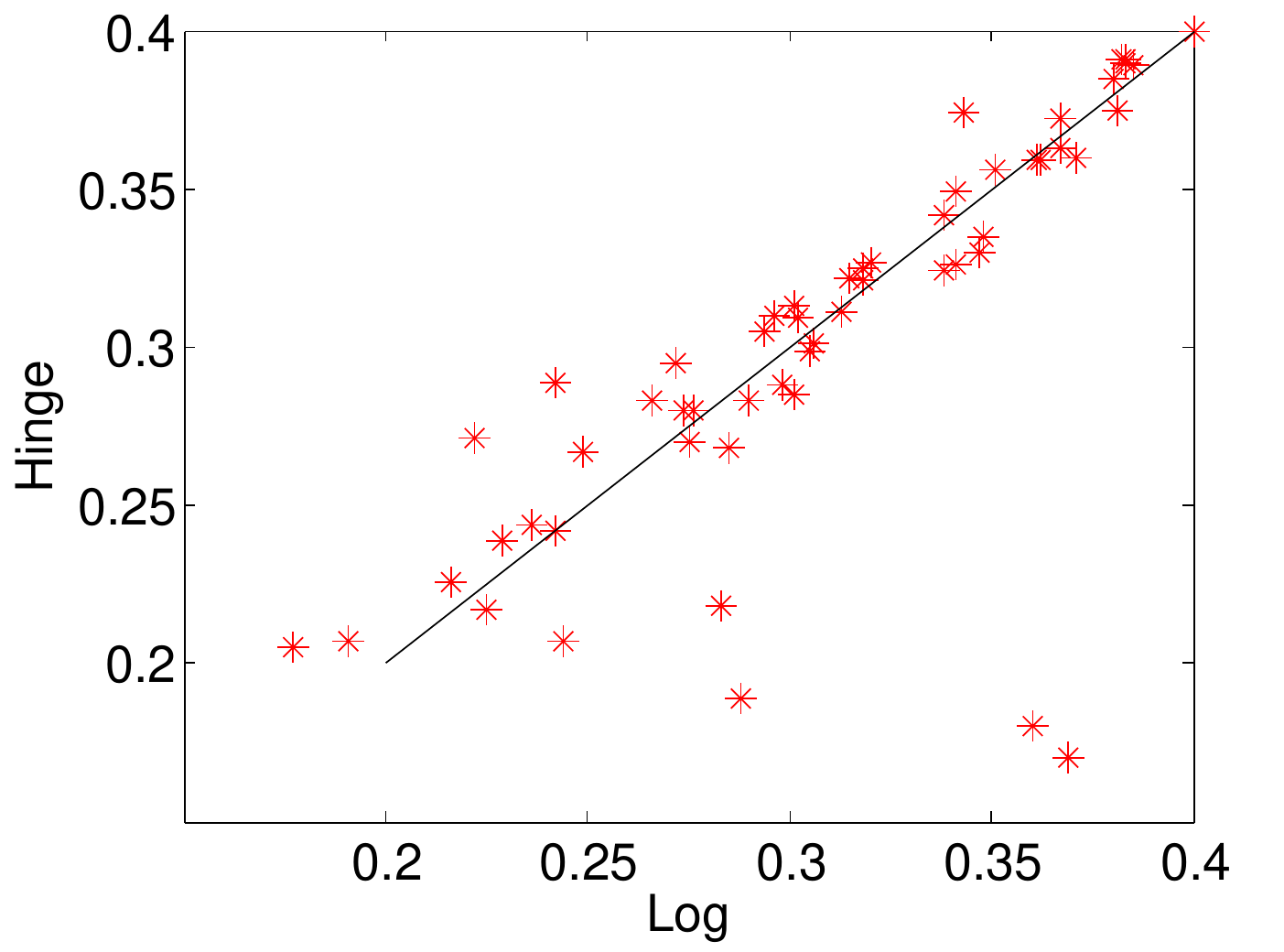}
     \label{fig:svm_crf}
     }
  \caption{Performance of the hybrid, hinge, and log losses on 
  non-dominant/dominant mixtures.
  Points denote pairs of test accuracies for models trained on one of 60 data sets
  using the losses named on the axes.
  Score $(a/b)$ denotes the vertical loss with $a$ wins and $b$ losses (ties
  not counted).}
\label{fig:multiclass_mix}
\end{figure*}

To make the experiment as simple as possible, we considered an observation space of 
size $|\Xcal|=1$ and focused on varying the number of labels and their 
probabilities. 

Fisher consistency analyses the behaviour of losses while observing the entire data population. To mimic seeing the entire data population and the dominant/non-dominant class case, we use a constant vector in $\RR^2$ as features, and learn the parameter vectors $w_y 
\in\RR^2$ for $y\in\Ycal$. The labels $y$ take different values proportionally as follows:
The label set $\Ycal$ took the sizes $|\Ycal|=3,4,5,\dots,10$.
One label $y^*\in\Ycal$ is assigned probability $D_{y^*}(x) = 0.46$ and the 
remainder are given an equal portion of 0.54 (\emph{e.g.}, in the 3 class case the 
other labels each have probability 0.27, and in the 10 class case, 0.06).
Note that this means for all the label set sizes, the gap 
$D_{y^*}(x) - D_{y}(x)$ is at least 0.19 which is always greater than 
$(1-\alpha)(1-2D_{y^*}(x)) = 0.04$ so the hybrid consistency condition 
(\ref{eq:alpha}) is always met. This way, the size of training data does not affect the training error as long as the proportions of $y$ values are not altered. In the same vein, the proportions of $y$ values in the test data are the same as that in the training data, thus test errors and training errors should be the same in this case. Thus we plot the resulting training errors (and the test errors) for hinge, log and hybrid losses in Figure~\ref{fig:multiclass} as a function of the number of labels.
As we can clearly see, the hinge loss error increases as the number
of classes increases, whereas the errors for the log and the hybrid
losses remain a constant ($1-D_{y^*}(x)$), in concordance with the consistency 
analysis.

Models were found using LBFGS \cite{Byretal94} with inexact line search, thus landing on the hinge point almost never happens. In theory, it is a problem --- it may not converge for the non-smooth optimisation problem. But in practice, it works well.

\begin{figure*}[tb]
 \centering
 \subfigure[the testing
  set ]{
     \includegraphics[width=0.4\linewidth]{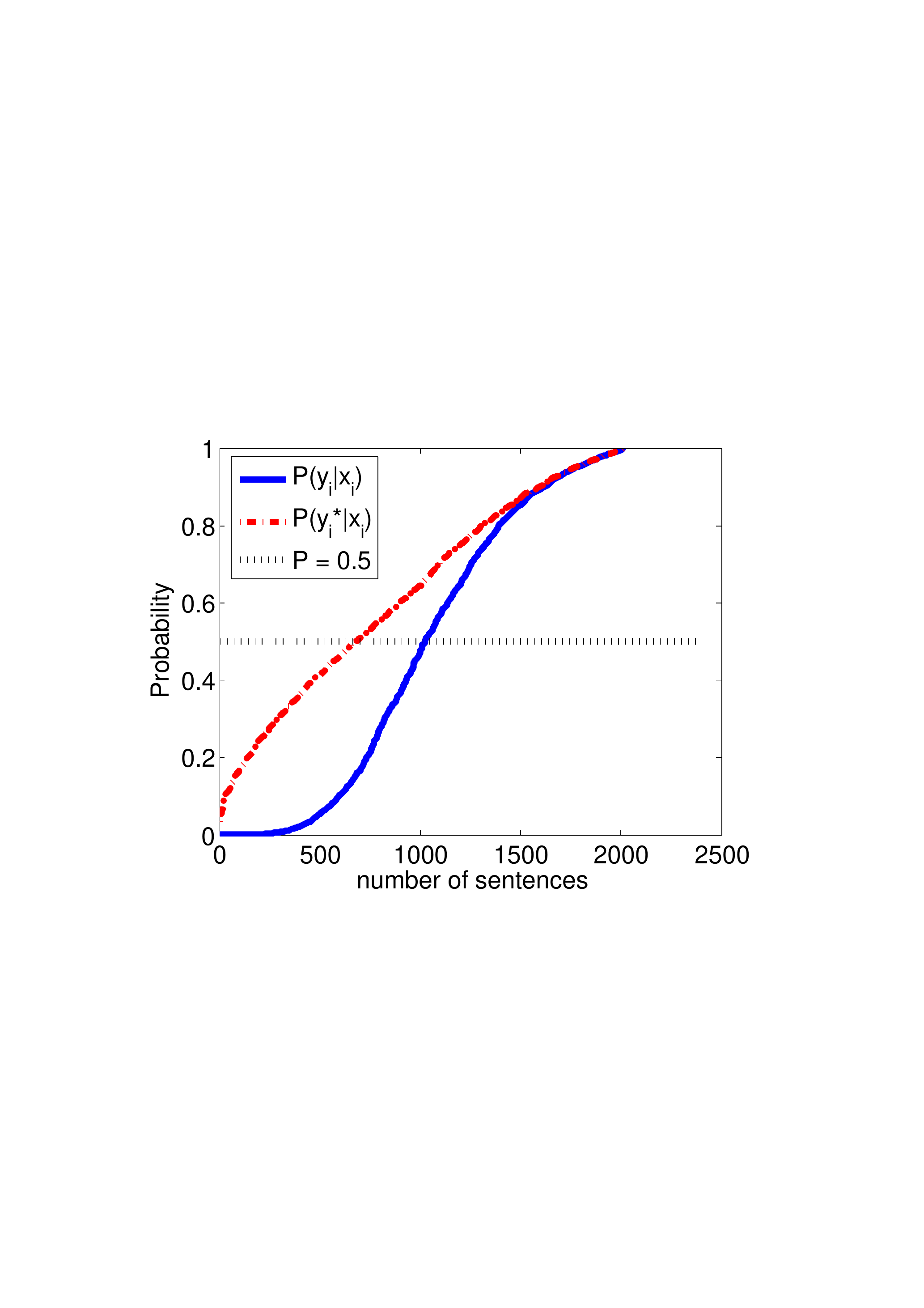}
     \label{fig:chunk_test}
     }
      \subfigure[the training set]{
     \includegraphics[width=0.4\linewidth]{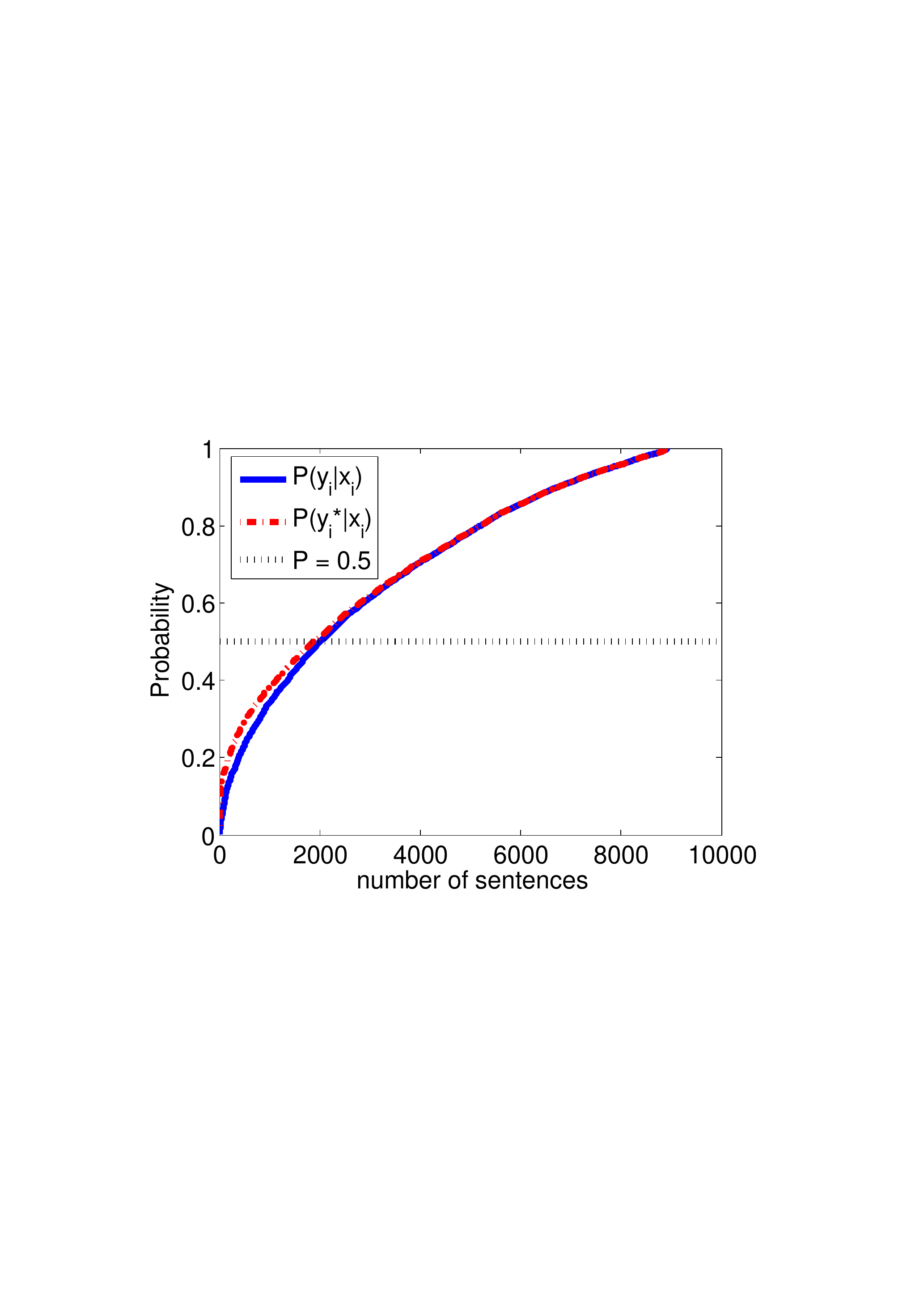}
     \label{fig:chunk_train}
     }
  \caption{Estimated probabilities of the true label $D_{y_i}(x_i)$ and most likely label $D_{y_i^*}(x_i)$. Sentences are sorted according to $D_{y_i}(x_i)$ and $D_{y_i^*}(x_i)$ respectively in ascending
  order. $D=1/2$ is shown as the straight black dot line. About 700
  sentences out of 2012 in the testing set and 2000 sentences out of 8936 in the training set
  have no dominant label.
} \label{fig:chunking}
\end{figure*}

{\Large
\begin{table*}[t]
    \centering {\caption{Accuracy, precision, recall and F1 Score on the
    CONLL2000 text chunking task.} \label{table:conll2000}}
    \begin{tabular}{c|c|c|c|c|c}
    %\hline
    Train Portion &Loss &Accuracy &Precision & Recall &F1 Score\\
    &Hinge&91.14 &85.31 &85.52 &85.41\\
    0.1&Log&92.05 &87.04 &87.01 &87.02\\
    &Hybrid&\bf{92.07} &\bf{87.17} &\bf{86.93} &\bf{87.05}\\
    \hline
    &Hinge&94.61 &91.23 &91.37 &91.30\\
    1&Log&95.10 &92.32 &91.97 &92.15\\
    &Hybrid&\bf{95.11} &\bf{92.35} &\bf{92.00} &\bf{92.17}\\
    \end{tabular}
\end{table*}
}

\subsection{Mix of Non-dominant and Dominant Distributions}

% Minimising a consistent loss will always outperform an inconsistent
% one when non-dominant distributions are present if the entire data
% distribution is available. However, in practice, we only have
% access to a small sample of the entire data distribution. 
The second synthetic experiment examined how the three losses performed 
given various training set sizes (denoted by $m$) and various proportions of
instances with non-dominant distributions (denoted by $\rho$).

We generated 60 different data sets, all with $\Ycal=\{1,2,3,4,5\}$, 
in the following manner: 
Instances came from either a non-dominant class distribution 
or a dominant class distribution. In the non-dominant class case,
$x\in\RR^{100}$ is set to a predefined, constant, non-zero vector 
and its label distribution is $D_{1}(x) = 0.4$ and $D_y(x) = 0.15$ for 
$y > 1$. 
In the dominant case, each dimension $x_i$ was drawn from a
normal distribution $N(\mu = 1+y,\sigma = 0.6)$ depending on the class 
$y = 1,\dots, 5$. 
The proportion $\rho$ ranged over 10 values $\rho=0.1, 0.2, 0.3,\dots,1$ and for 
each $\rho$, test and validation sets of size 1000 were generated.
Training set sizes of $m = 30, 60, 100, 300, 600, 1000$ were used for each
$\rho$ value for a total of 60 training sets.
The optimal regularisation parameter $\lambda$ and hybrid loss parameter $\alpha$ 
were selected using the validation set for each loss on each training set.
Then models with parameters $w_y\in\RR^{100}$ for $y\in\Ycal$ were found using 
LBFGS \cite{Byretal94} for each of the three losses on each of the 60 training sets and then 
assessed using the test set.

The results are summarised in Figure~\ref{fig:multiclass_mix}. 
Each point shows the test accuracy for a pair of losses. 
The predominance of points above the diagonal lines in a) and b) show that the 
hybrid loss outperforms the hinge loss and the log loss in most of the data sets. 
while the log and hinge losses perform competitively against each other.

\begin{table*}[t]
%\centering {\caption{Error rate on multiclass classification for
%increasing training/test set size and non-dominant class $\rho$.}
%\label{table:multiclass_syn}
%\begin{tabular}{l|l|c|c|c|c|c|c}
%%\hline
%&train/test &0.03 &0.06 & 0.1 &0.3& 0.6 & 1\\
%%\hline
%$\rho = 0.1$&SVM&? &? &? & ? &? &?\\
%&CRF&? &? &? & ? & ?&?\\
%&Hybrid&? &? &?  &? &? &?\\
%\hline $\rho = 0.2$&SVM&? &? &? & ? &? &?\\
%&CRF&? &? &? & ? & ?&?\\
%&Hybrid&? &? &?  &? &? &?\\
%\hline $\rho = 0.3$&SVM&? &? &? & ? &? &?\\
%&CRF&? &? &? & ? & ?&?\\
%&Hybrid&? &? &?  &? &? &?\\
%\end{tabular}
%}\\
\centering {\caption{Accuracy, precision, recall and F1 Score on the
baseNP chunking task for training on increasing portions of training
set.} \label{table:basenp}
\begin{tabular}{c|c|c|c|c|c}
%\hline
Train Portion &Loss &Accuracy &Precision & Recall &F1 Score\\
&Hinge&88.48 &71.70 &75.96 &73.77\\ %epochs = 500, Fold2/0.1b
0.1&Log&90.86 &81.09 &78.96 &80.01\\
&Hybrid&\bf{90.90} &\bf{81.23} &\bf{79.09} &\bf{80.15}\\
\hline
&Hinge&94.64 &87.58 &88.30 &87.94\\%epochs = 50, Fold2/
1& Log&95.21 &90.07 &88.89 &89.48\\
&Hybrid&\bf{95.24} &\bf{90.12} &\bf{88.98} &\bf{89.55}\\
\end{tabular}
}\\
%\centering {\caption{Accuracy, precision, recall and F1 Score on the
%Japanese named entity recognition task.} \label{table:japan}
%\begin{tabular}{l|c|c|c|c}
%%\hline
%Loss &Accuracy &Precision & Recall &F1 Score\\
%Hinge&95.63 &73.24 &64.37 &68.52 \\
%Log&95.92 &78.22 &64.85 &70.91\\
%Hybrid&\bf{95.95} &\bf{79.02} &\bf{65.32} &\bf{71.52}\\
%\end{tabular} } \centering
\end{table*}

\section{Structured Estimation}\label{sec:experiments}
Unlike the general multiclass case, structured estimation problems
have a higher chance of non-dominant distributions because of the
very large number of labels as well as ties or ambiguity regarding
those labels. For example, in text chunking,
changing the tag of one phrase while leaving the rest unchanged
should not drastically change the probability predictions --
especially when there are ambiguities.
Due to the prevalence of non-dominant distributions, we expect
models trained using the hinge loss to perform poorly on these
problems relative to those trained with hybrid or log losses.
We emphasise that our main motivation for investigating structured 
prediction problems is that, as multiclass problems,
they tend to have non-dominant distributions.

%In structured prediction, FCC is in the parametric setting. 
%
%To better connect our present theory with actual practice on 
%structured prediction problems we plan to investigate consistency for 
%losses (e.g. Hamming loss) that are more commonly used in these problems.

\subsection{CONLL2000 Text Chunking} Our first structured estimation experiment 
is carried out on the CONLL2000 text chunking task \cite{conll2000}.
The data set has 8936 training sentences and 2012 test sentences
with 106978 and 23852 phrases (a.k.a. chunks), respectively. The
task is to divide a text into syntactically correlated parts of words
such as noun phrases, verb phrases, and so on. For a sentence with
$L$ chunks, its label consists of the tagging sequence of all its
chunks, \emph{i.e.} $y = (y^1,y^2,\dots,y^L)$, where $y^j$ is the chunking
tag for chunk $j$. As is common in this task, the label $y$ is
modelled as a chain-structured graphical model to account for the dependency between
adjacent chunking tags $(y_i^j,y_i^{j+1})$ given observation $x_i$.
Clearly, the model has exponentially many possible labels, which
suggests the absence of a dominant class.

Since the true underlying distribution is unknown, we train a
CRF
on the training set and then apply
the trained model to both testing and training datasets to obtain an
estimate of the conditional distributions for each instance. We sort
the sentences $x_i$ from highest to lowest estimated probability on
the true chunking label $y_i$ given $x_i$. The result is plotted in
Figure~\ref{fig:chunking}, from which we observe the existence of
many non-dominant distributions --- about 1/3 of the testing
sentences and about 1/4 of the training sentences.

We use the feature template from the CRF++ toolkit \cite{crfplusplus}, and
    the CRF code from Leon Bottou \cite{crfsgd}. Stochastic Gradient Descent (SGD) \cite{crfsgd} is used for training. During training, dynamic programming (\ie Viterbi algorithm) for inference is used.
We split the data into 3 parts: training ($20\%$), testing ($40\%$)
and validation ($40\%$).  
The regularisation parameter $\lambda$ and
the weight $\alpha$ were determined via parameter selection using the
validation set. To see the performance with different training
sizes, we took part of the training data to learn the model and gathered statistics
on the test set. The accuracy, precision, recall
and F1 Score on the test set are reported in Table~\ref{table:conll2000}
when using 10\% and 100\% of the training set. 
The hybrid loss marginally outperforms both the hinge loss and the log loss.

\subsection{baseNP Chunking}
A similar methodology to the previous experiment is applied to the BaseNP data set  
\cite{crfplusplus}. It has 900 sentences in
total and the task is to automatically classify a chunking phrase as
baseNP or not. 

Again SGD \cite{crfsgd} is used for training. During training, dynamic programming (\ie Viterbi algorithm) for inference is used.
We split the data into 3 parts: training ($20\%$),
testing ($40\%$) and validation ($40\%$). Once again, $\lambda$ and $\alpha$ are
determined via model selection on the validation set. We report the
test accuracy, precision, recall and F1 Score in
Table~\ref{table:basenp} for training on increasing proportions of
the training set. The hybrid marginally outperforms the other two losses on all
measures.

\subsection{Human action recognition}
Here we consider recognising human actions in TV episodes, where each contains one or more persons and may interact with each other. We evaluate our method on the TVHI dataset \citep{patron:2010high}, which contains 300 short videos collected from TV episodes and includes five action classes: \emph{handshake} (HS), \emph{hug} (HG), \emph{high-five} (HF), \emph{kiss} (KS) and \emph{others} (OT). Here a person labelled the action \emph{others} means that there is no interaction between this person and any other persons in the image. Each video contains a number (up to seven) of people performing one of the five action classes. The ground-truth provided with the dataset includes upper body bounding boxes, discrete body poses, the action labels and the interaction status between any pair of persons (\ie a binary variable indicating whether there is an interaction). We manually choose 2,188 images from this dataset and divide these examples into three sets without intersection: the training set (400 frames), the validating set (894 frames) and the testing set (894 frames). Here $\alpha$ is determined via model selection on the validation set. Note in \cite{patron:2010high} their task is to predict both interactions and actions, whereas here our task is to predict actions given interaction status. More specifically, our goal is to solve the estimation problem of finding the actions $y\in \Ycal$ of all subjects in an observation image $x \in \Xcal$, given pairwise interaction status.

We use PGMs to model the dependency of the actions in the same image. Consider a graph $G=(\Vcal,\Ecal)$ with each node $i \in \Vcal$ representing an action variable $y^i$ and each edge $(i,j) \in \Ecal$ reflecting the dependency of the two action variables. The edge set $\Ecal$ is constructed according to the annotated interaction status. If there is an interaction between two persons in the annotation, then an edge between two corresponding nodes is added to the edge set $\Ecal$. 

We cast this estimation problem as finding an energy function $E(x,y)$ such that for an observation image $x \in \Xcal$, we assign the actions that receive the smallest energy with respect to $E$, that is
\begin{equation}
y ^*= \argmin_{y\in \Ycal} E(x,y;w).
\label{eq:infer}
\end{equation}
%We cast the problem of predicting the actions $y \in \Ycal$ of the subjects in an image $x \in \Xcal$, as estimating a energy function $E(x,y)$, such that the most suitable actions   
Here we use the energy function $E$ with unary terms $U$ and pairwise terms $S$ as follows,
\begin{equation}
E(x,y;w)=\sum_{i\in \Vcal}U(y^i,x;w')+\sum_{(i,j)\in \Ecal}S(y^i,y^j,x;w'').
\end{equation}
where $w=[w';w'']$, and
\begin{subequations}
\begin{align}
&U(y^i,x;w')=\inner{\phi_1(x^i,y^i)}{w'},\\
&S(y^i,y^j,x;w'')=\inner{\phi_2(x^i,x^j,y^i,y^j)}{w''}.
\end{align}
\label{eq:energy}
\end{subequations}
Here $\phi_1$ and $\phi_2$ are node and edge features (which we will define later),  and $x^i$ is the sub-image of the bounding box on the $i$-th subject. The model parameter $w$ will be learned during training. %Given model parameter $w$, predicting the actions for the input image $x$, becomes minimising the energy function $E$, \ie $\argmin_{y\in \Ycal} E(x,y;w)$.

Our feature representation is a combination of several visual cues including multiclass SVM action classification scores, human body poses and the relative position between two individuals, which have been exploited to distinguish different actions in \cite{patron:2010high, patron:2012structured}. Here we combine these visual cues in a similar way as \cite{patron:2010high}. To be specific, let $\eb_1(y^i)\in \{0,1\}^5$ represent an unit vector with the $y^i$-th dimension equals 1 (0 elsewhere). Similarly, let $\eb_2(r^{ij})\in \{0,1\}^6$ denote another unit vector with the $r^{ij}$-th dimension equals 1 (0 elsewhere). Here $r^{ij}$ denotes the relative position of person $j$ to $i$. To compute $r^{ij}$, we employ a simple method in \cite{patron:2010high} which only requires the bounding boxes of person $i$ and $j$. Each $r^{ij}$ value represents a relative position in the set $\{overlap, adjacent-left, adjacent-right, near-left, near-right, far\}$. Let $\otimes$ denote the Kronecker product. The features are defined as
\begin{subequations}
\begin{align}
&\phi_1(x^i,y^i)=\db^i\otimes \eb_1(y^i),\label{eq:node_feat}\\
&\phi_2(x^i,x^j,y^i,y^j)=\eb_1(y^i)\otimes\eb_1(y^j)\otimes\eb_2(r^{ij})\otimes\ob^i\otimes\ob^j\notag\\
&\qquad\qquad+\eb_1(y^j)\otimes\eb_1(y^i)\otimes\eb_2(r^{ij})\otimes\ob^j\otimes\ob^i,\label{eq:edge_feat}
\end{align}
\label{eq:feat}
\end{subequations}
where $\db^i\in \RR^5$ is a score vector contains the action classification scores obtained by applying a multiclass SVM classifier to the histograms of gradients (HoG) descriptor \cite{dalal2005histograms} extracted from the bounding box area of person $i$. Similarly, $\ob^i\in \RR^5$ represents another score vector of the pose classification scores (the descriptors for pose classification is the same as that used for action classification). Here we consider five body pose classes $\{profile-left, profile-right, frontal-left, frontal-right, backwards\}$. To extract the HoG features, we superimpose an $8\times 8$ grid on the bounding box area and accumulate HoG for each grid cell using five orientation bins. The final descriptor is a concatenation of the sub-descriptors of all cells.

Intuitively, the node term $U$ reflects the confidence of assigning person $i$ the action label $y^i$ observing $x^i$. The edge term $S$ encodes the correlation between actions $y^i$ and $y^j$ observing $x^i,x^j$.

Implementing Kronecker product naively to compute energies could be memory and time consuming. Fortunately, we see that the feature vectors in \eq{eq:feat} are highly sparse (especially the edge feature). Thus we only need to multiply the non-zero components of the feature vectors with corresponding components of the $w$ without using Kronecker product. 

\begin{figure*}
\centering
\subfigure[hinge loss]{\includegraphics[width=.32\textwidth]{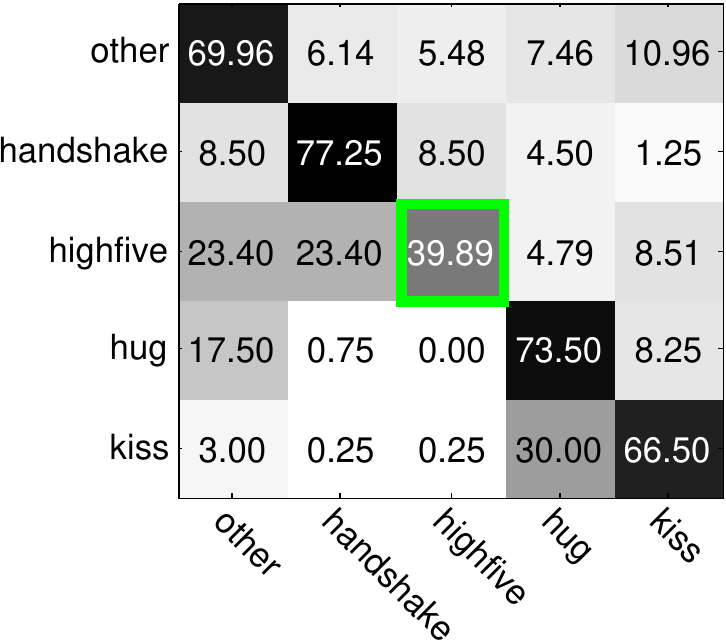}}
\subfigure[log loss]{\includegraphics[width=.32\textwidth]{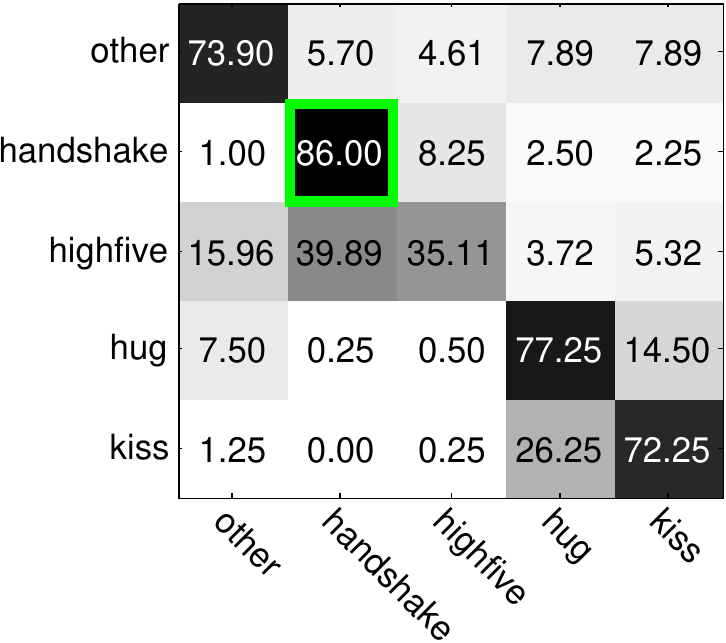}}
\subfigure[hybrid loss]{\includegraphics[width=.32\textwidth]{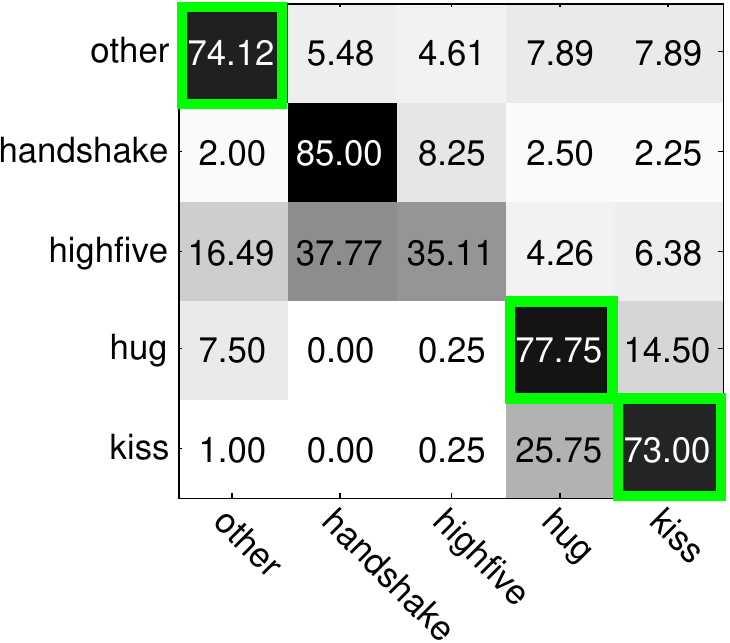}}
\caption{Confusion matrices on the TVHI dataset. For each class the best are highlighted by green rectangles. The hybrid loss achieves the best classification accuracy on three out of five action classes, \ie \emph{other, hug} and \emph{kiss}.}
\label{fig:conf_mat_tvhi}
\end{figure*}

\begin{figure*}
\centering
\subfigure{\includegraphics[width=44mm]{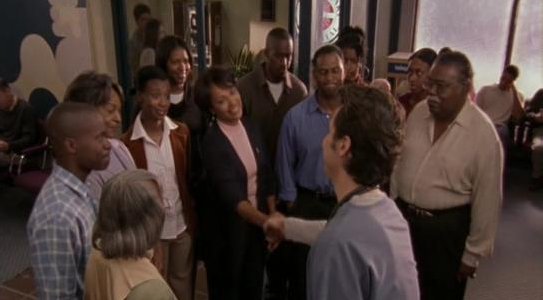}}
\subfigure{\includegraphics[width=44mm]{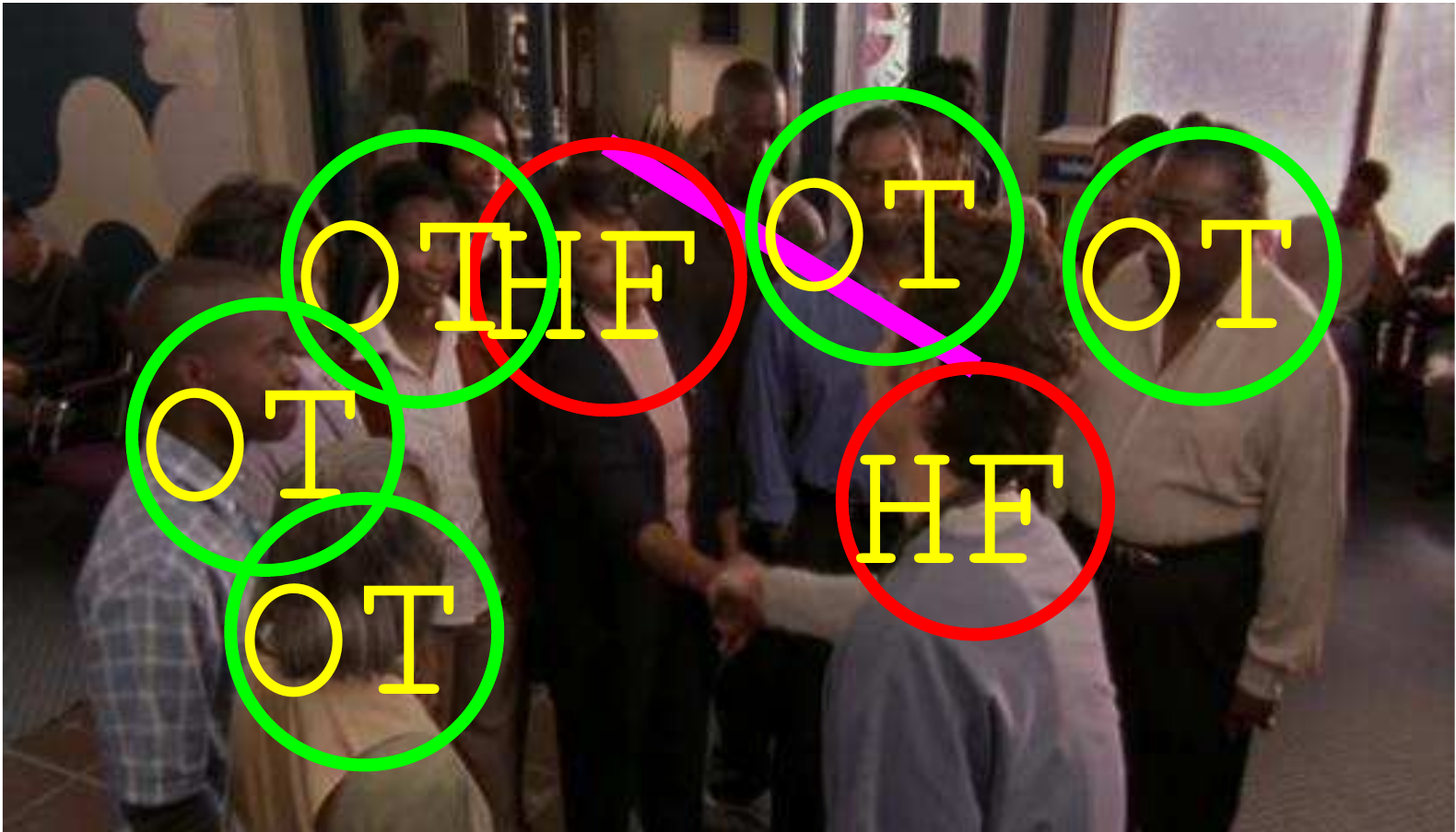}}
\subfigure{\includegraphics[width=44mm]{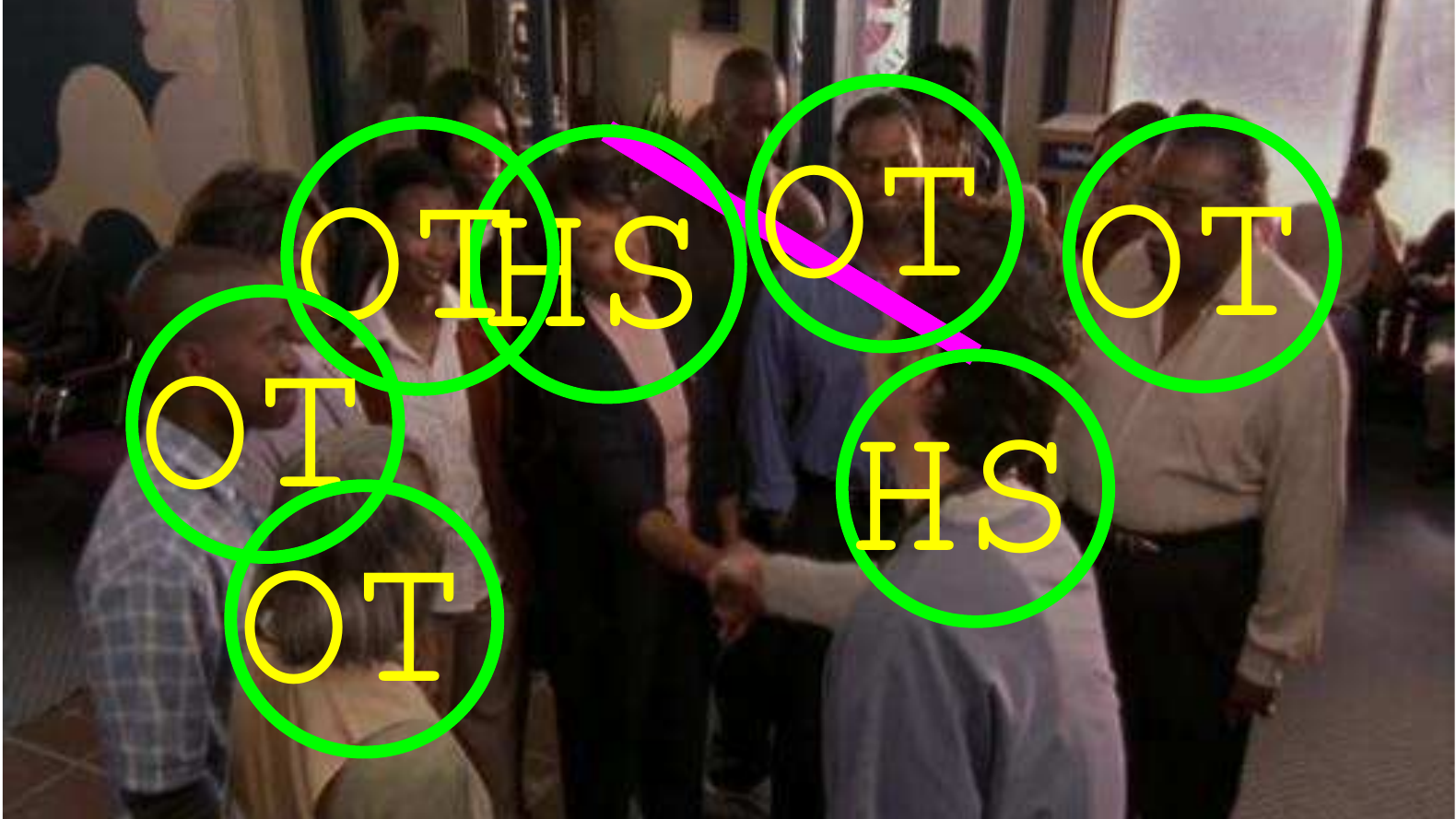}}
\subfigure{\includegraphics[width=44mm]{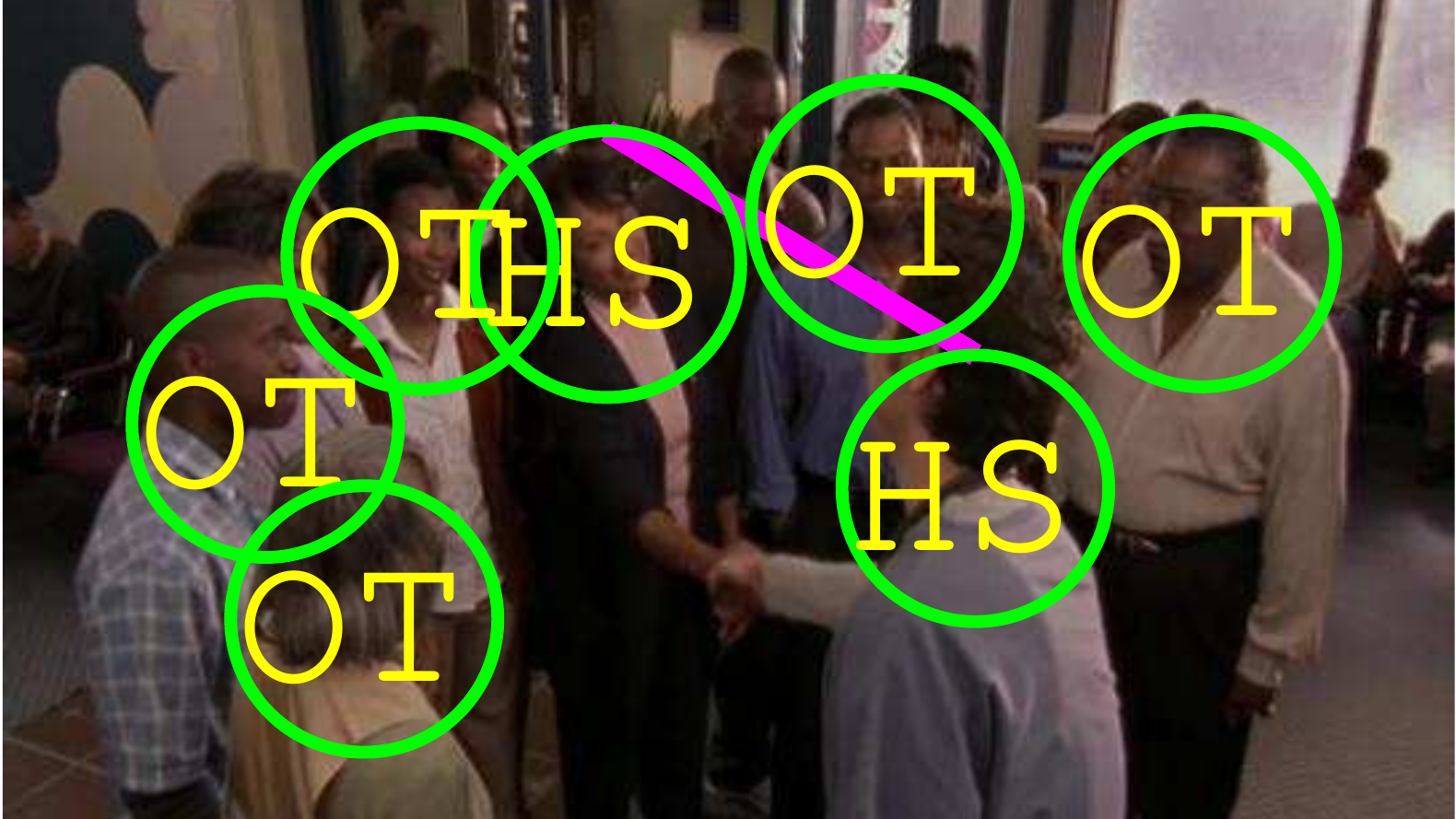}}
\subfigure{\includegraphics[width=44mm]{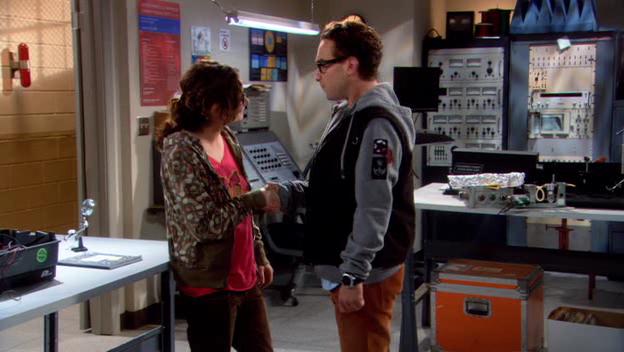}}
\subfigure{\includegraphics[width=44mm]{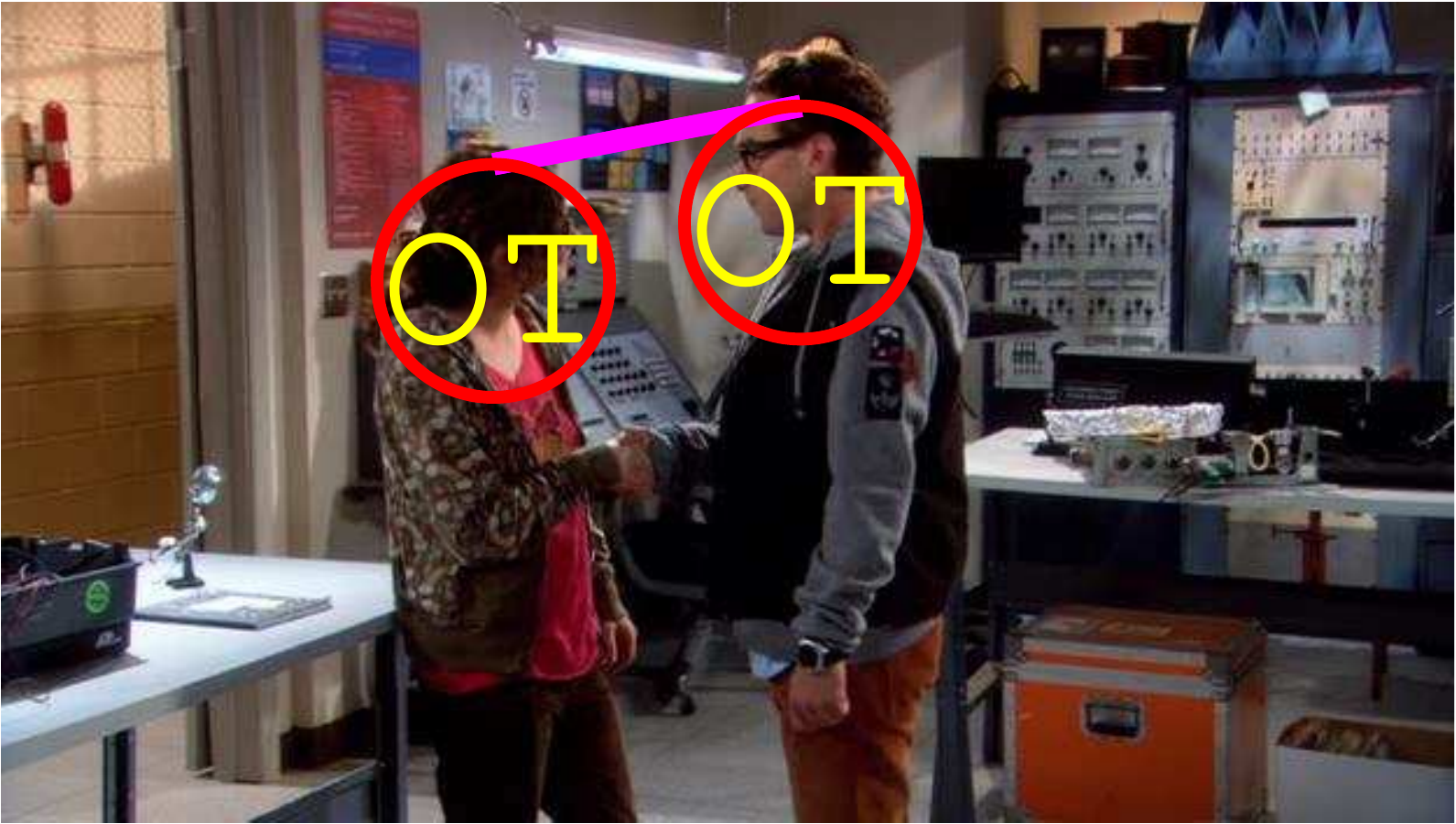}}
\subfigure{\includegraphics[width=44mm]{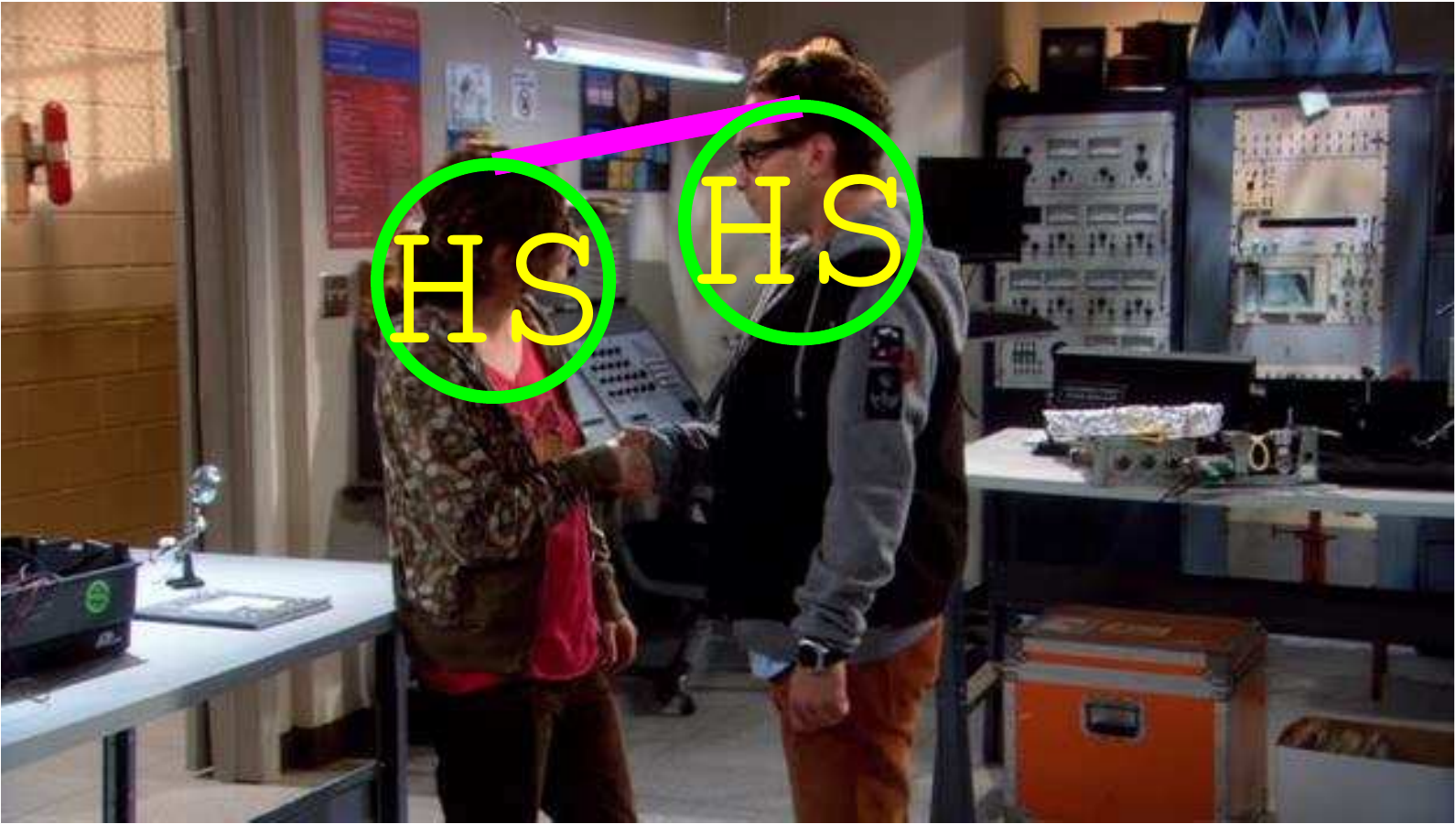}}
\subfigure{\includegraphics[width=44mm]{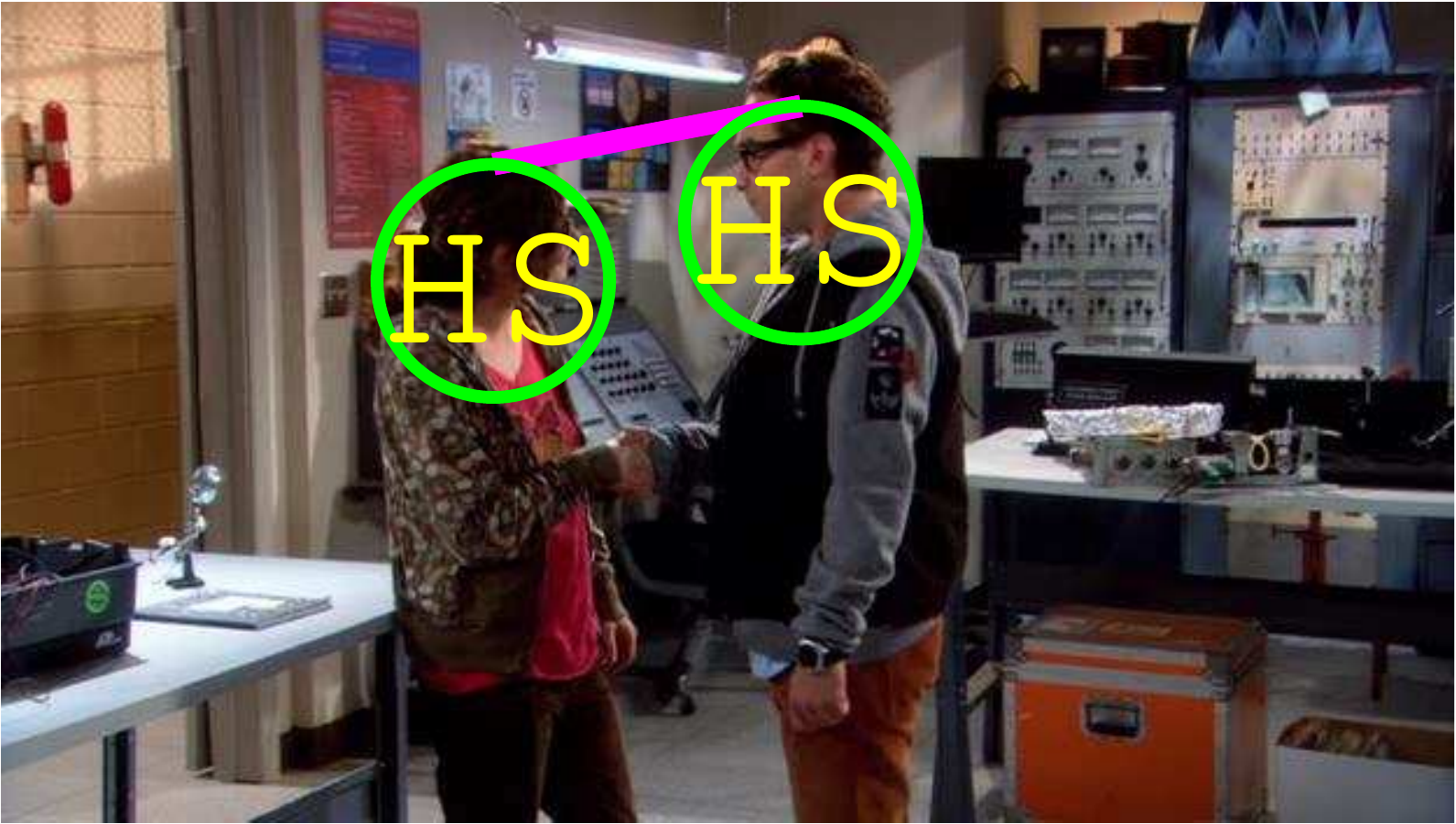}}
\subfigure{\includegraphics[width=44mm]{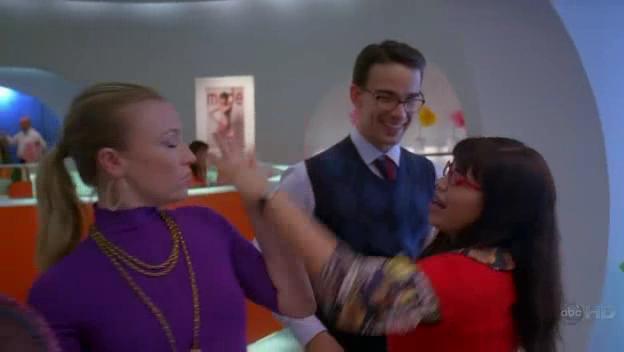}}
\subfigure{\includegraphics[width=44mm]{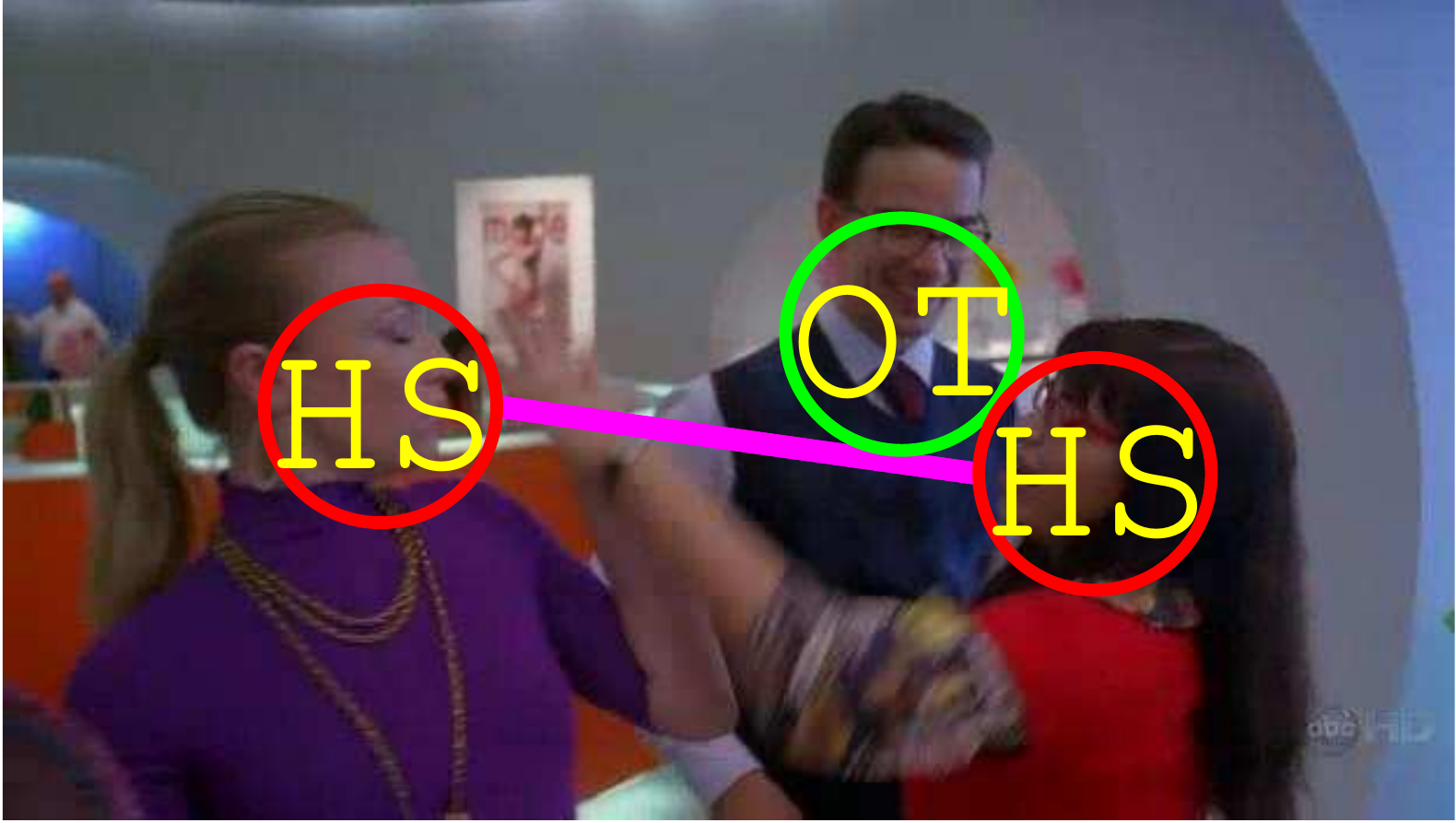}}
\subfigure{\includegraphics[width=44mm]{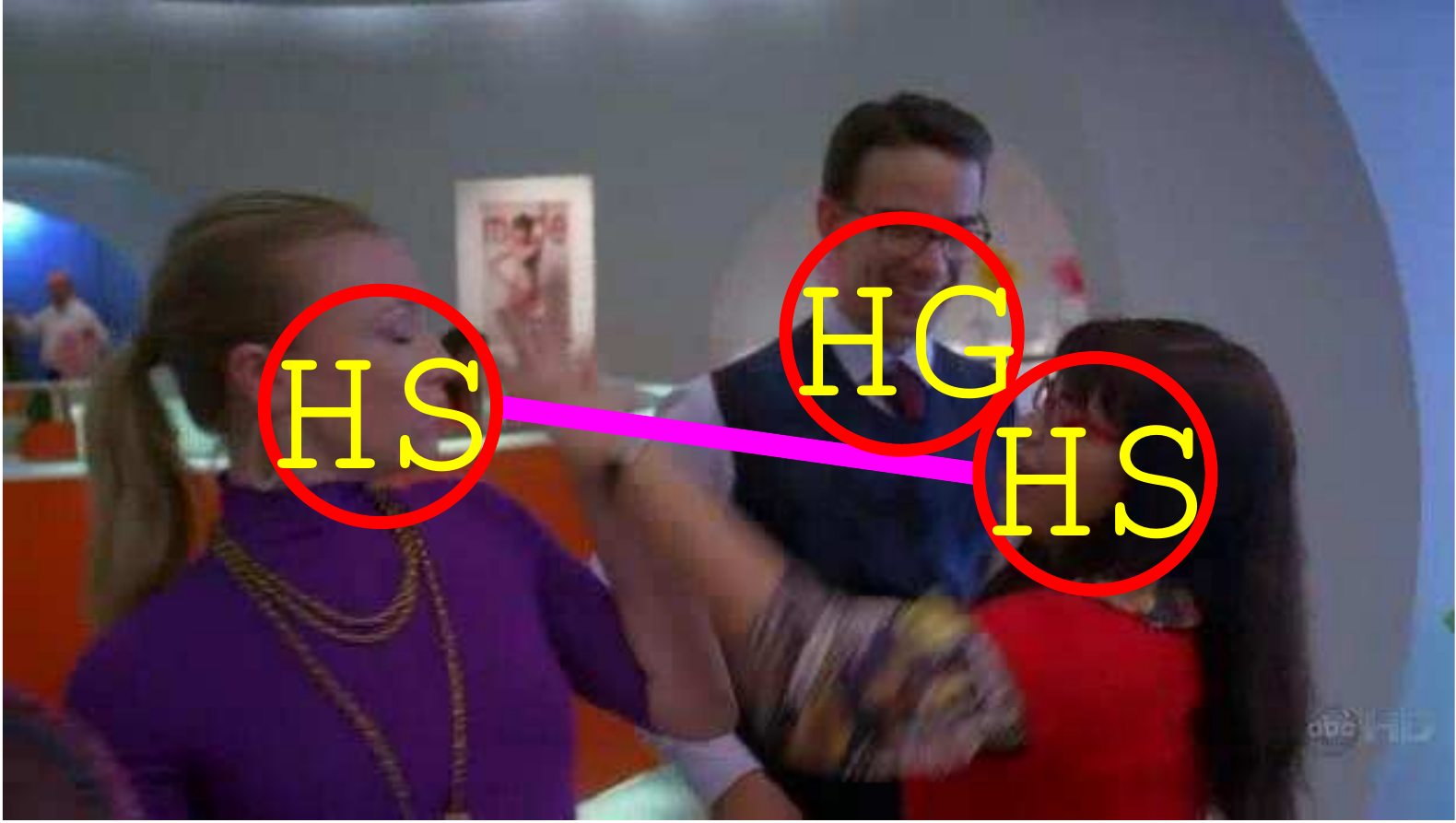}}
\subfigure{\includegraphics[width=44mm]{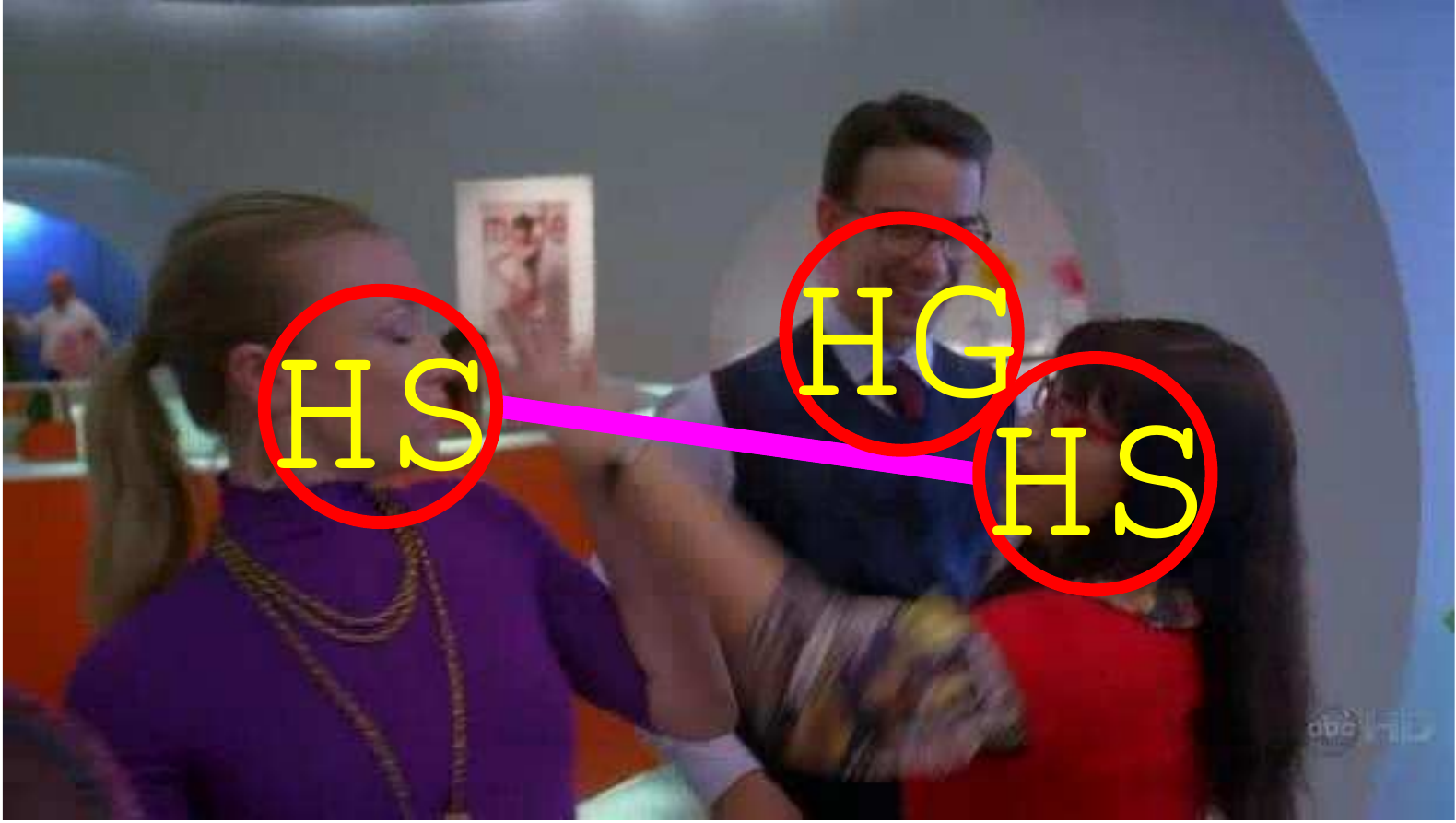}}
%\subfigure{\includegraphics[width=44mm]{img_hf_16_10.jpg}}
%\subfigure{\includegraphics[width=44mm]{hinge_hf_16_10.pdf}}
%\subfigure{\includegraphics[width=44mm]{log_hf_16_10.pdf}}
%\subfigure{\includegraphics[width=44mm]{hybrid_hf_16_10.pdf}}
\subfigure{\includegraphics[width=44mm]{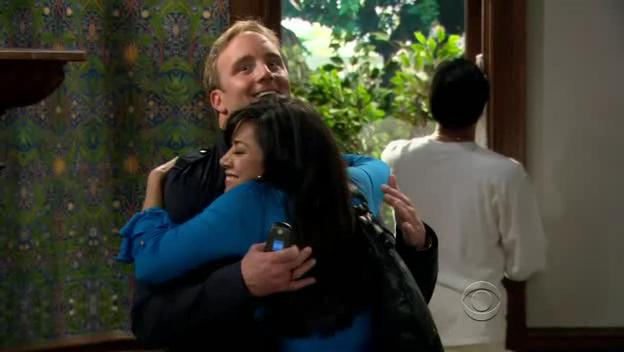}}
\subfigure{\includegraphics[width=44mm]{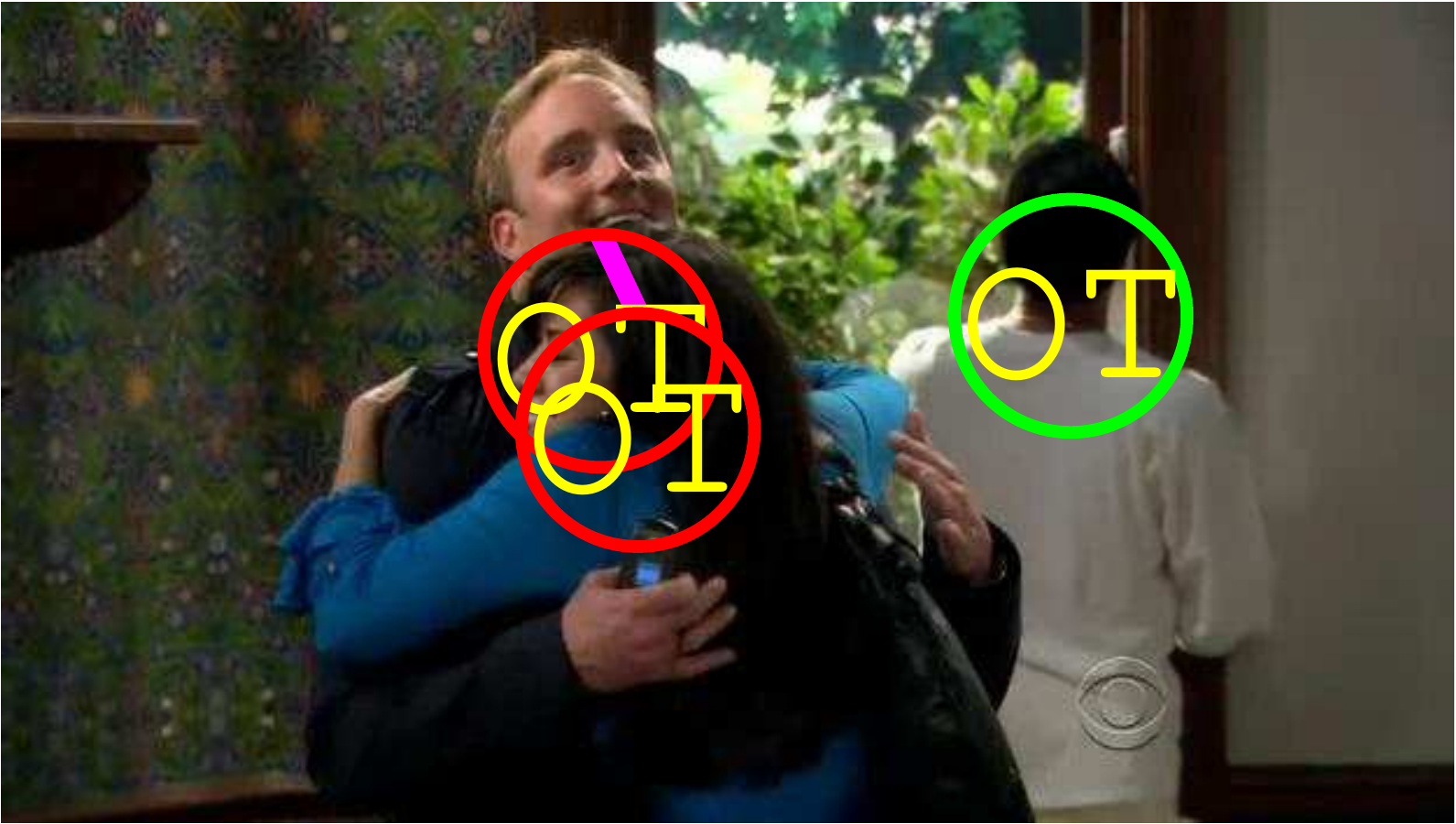}}
\subfigure{\includegraphics[width=44mm]{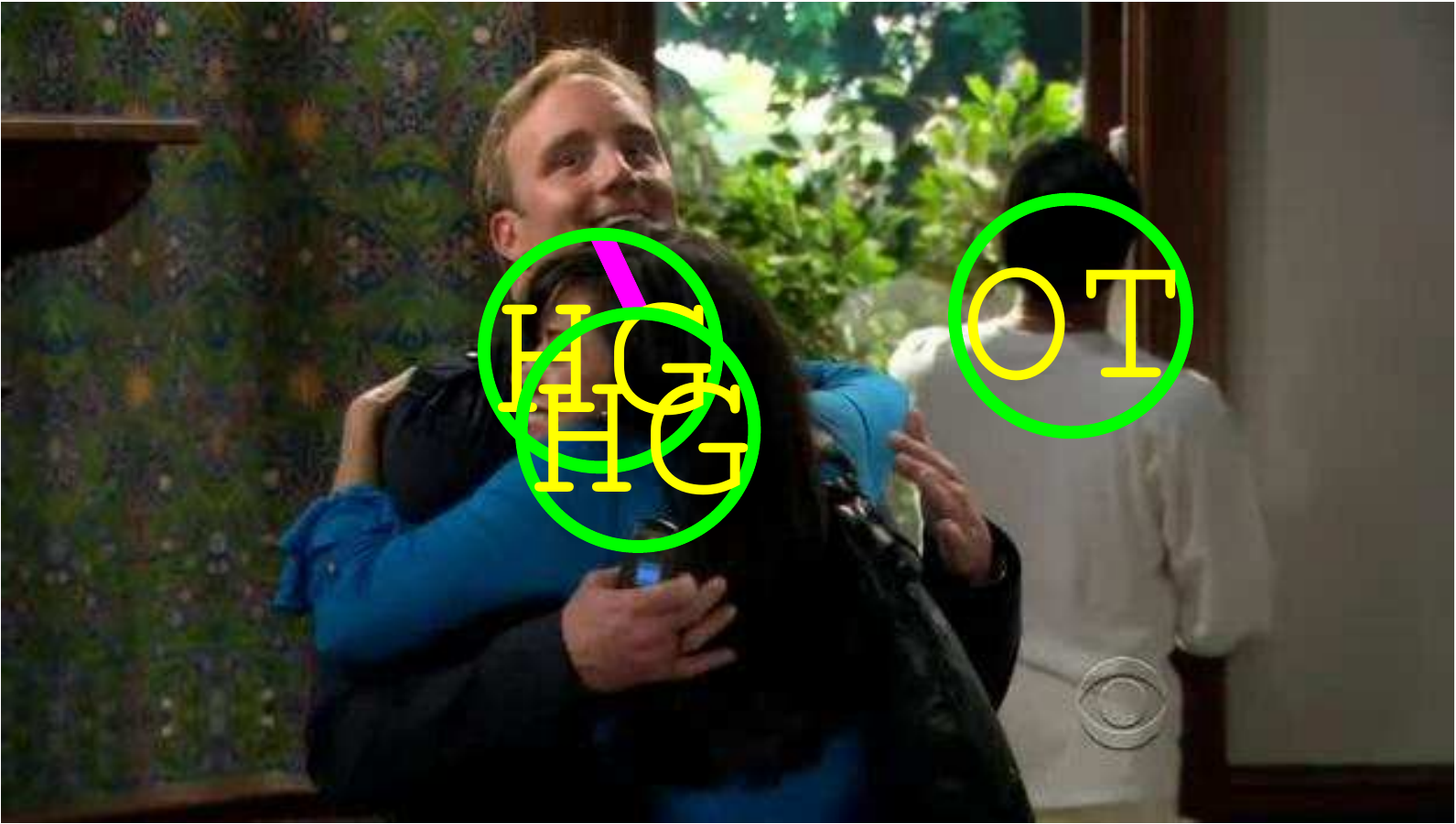}}
\subfigure{\includegraphics[width=44mm]{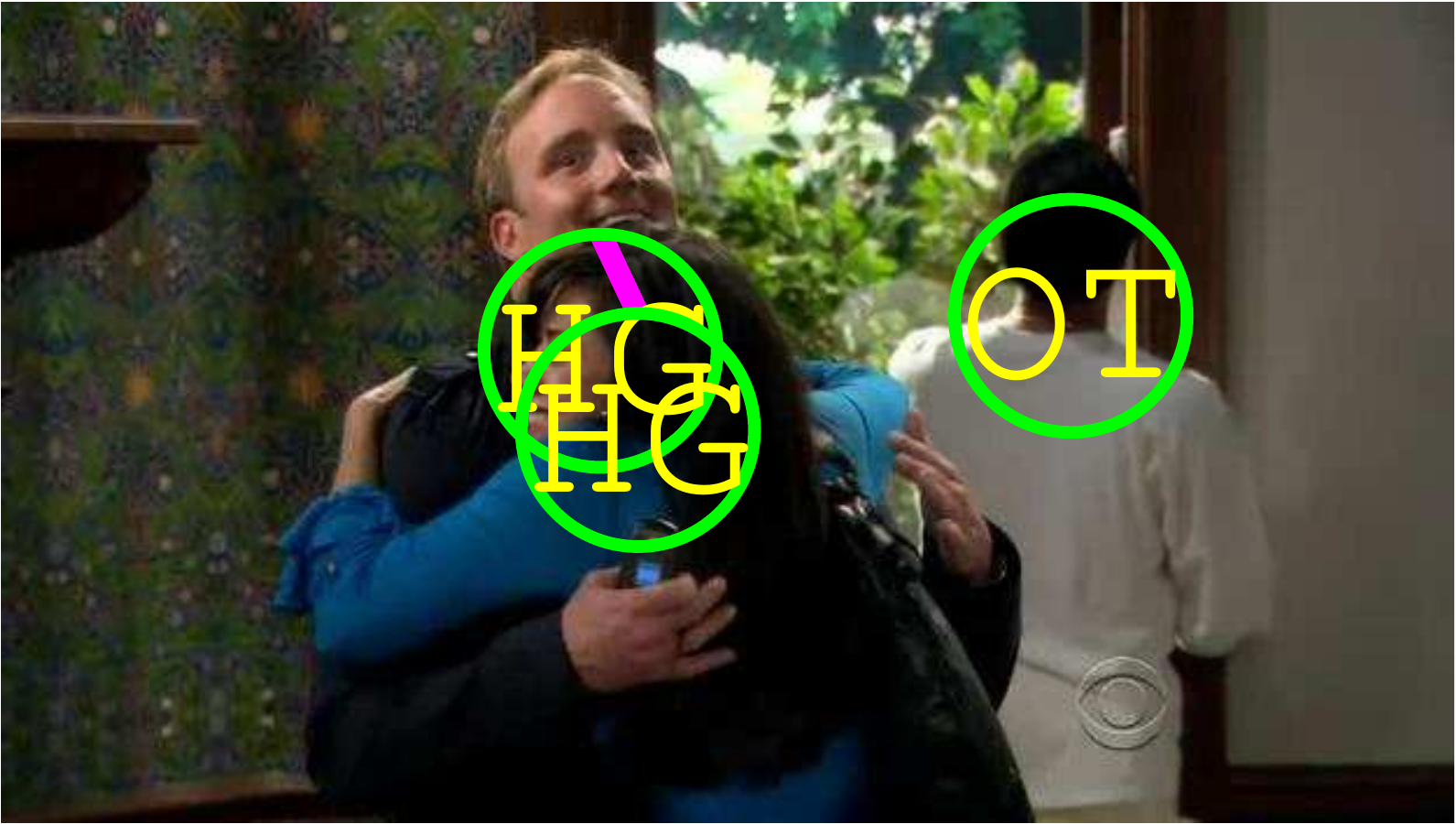}}
%\subfigure{\includegraphics[width=44mm]{img_hg_37_87.jpg}}
%\subfigure{\includegraphics[width=44mm]{hinge_hg_37_87.pdf}}
%\subfigure{\includegraphics[width=44mm]{log_hg_37_87.pdf}}
%\subfigure{\includegraphics[width=44mm]{hybrid_hg_37_87.pdf}}
\subfigure{\includegraphics[width=44mm]{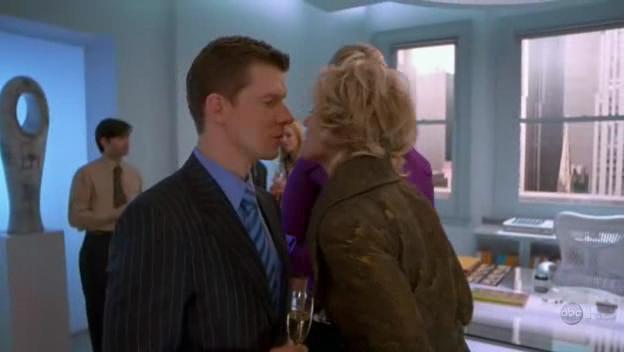}}
\subfigure{\includegraphics[width=44mm]{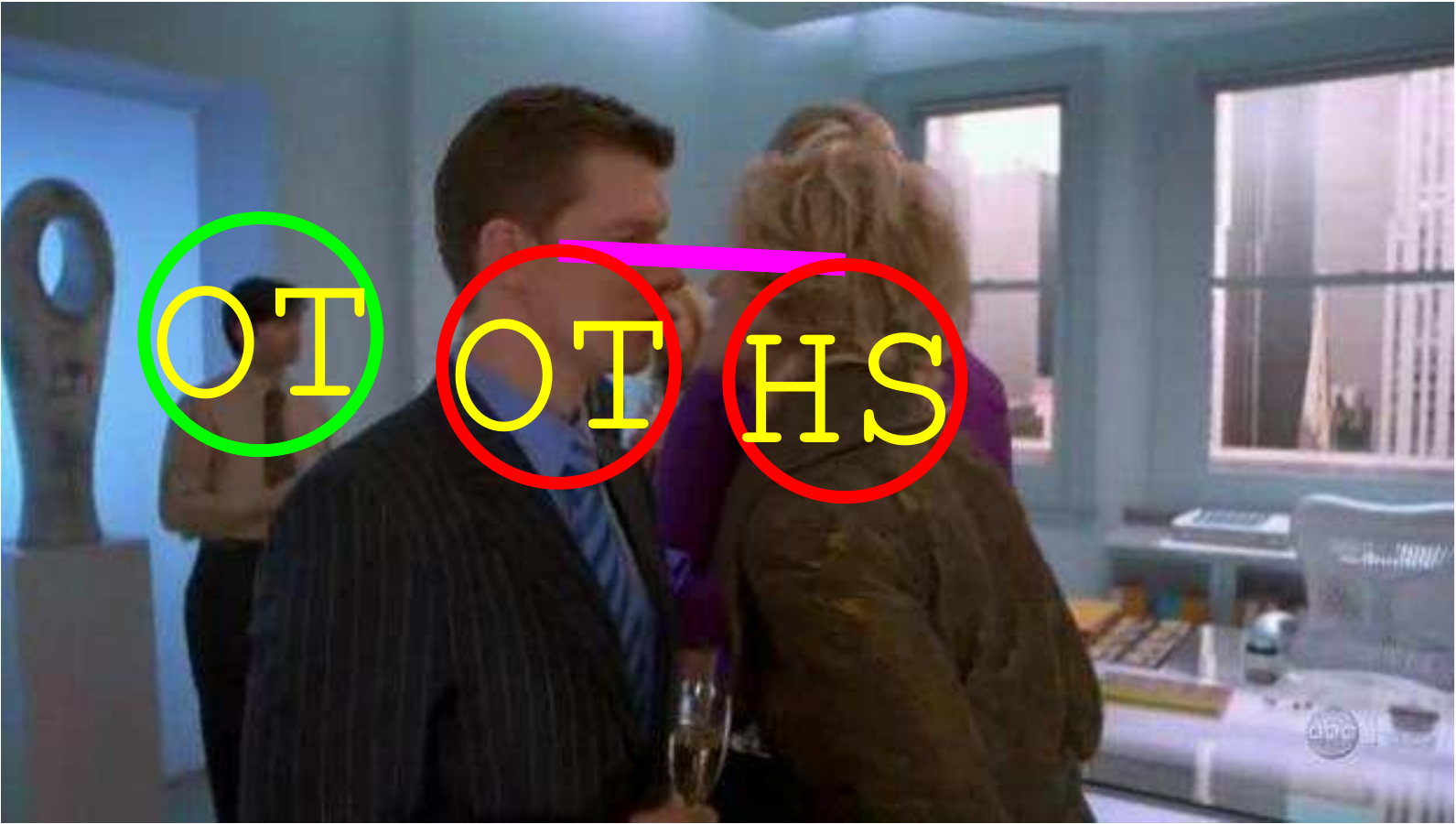}}
\subfigure{\includegraphics[width=44mm]{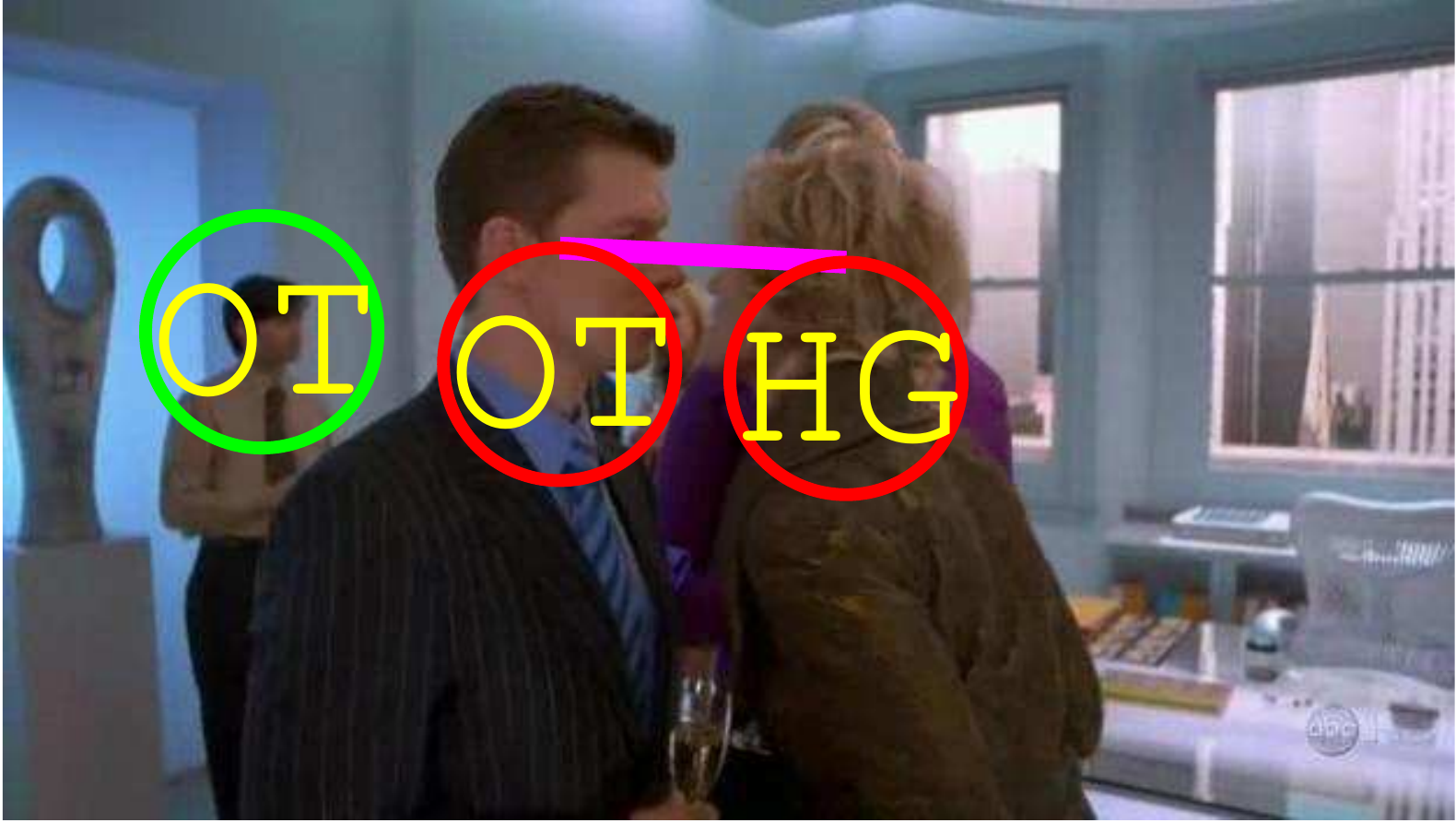}}
\subfigure{\includegraphics[width=44mm]{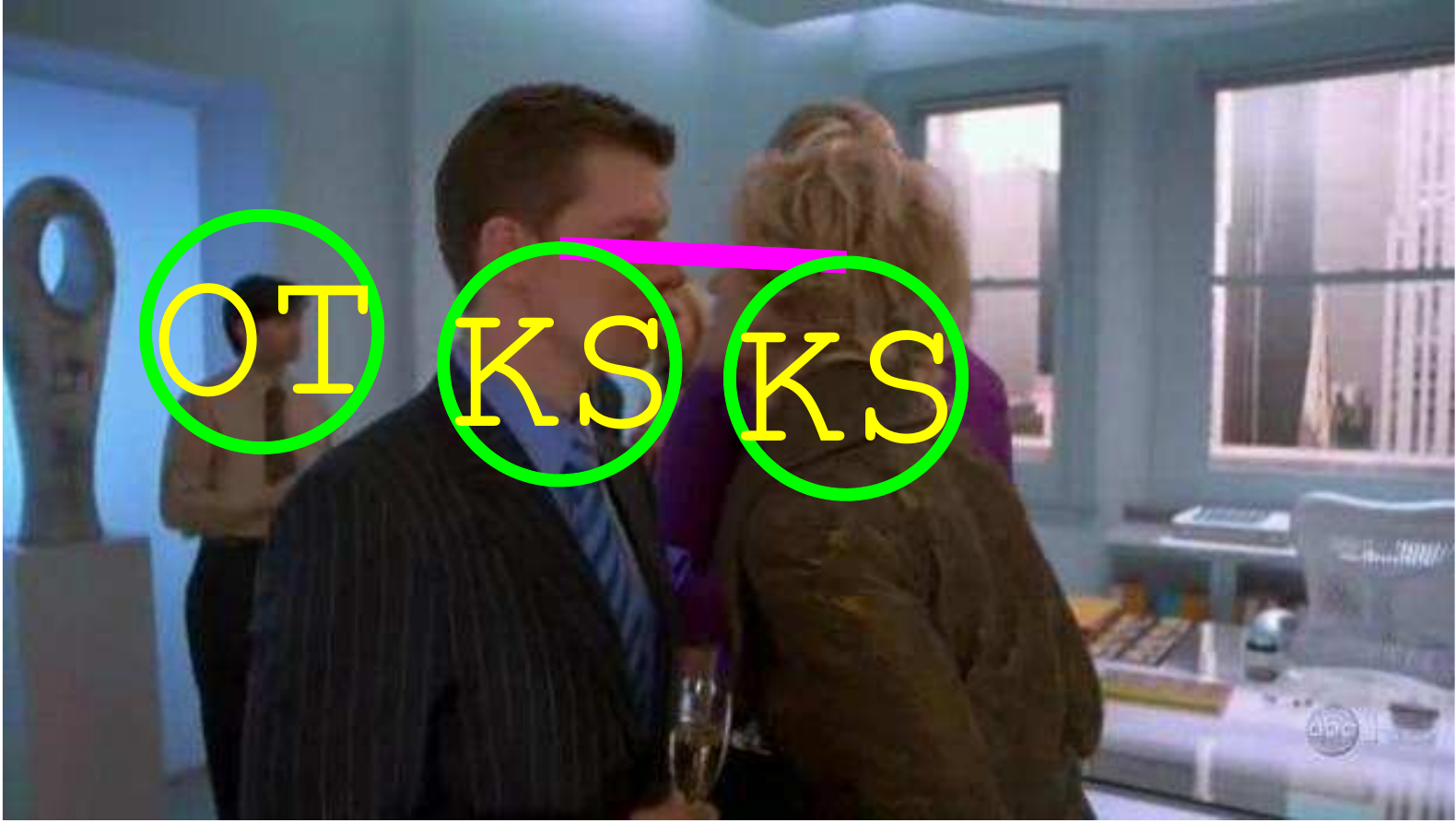}}
\caption{Visualisation of some action predictions using different losses. {\bf FIRST COLUMN}:  the input images; {\bf SECOND COLUMN}: the hinge loss results; {\bf THIRD COLUMN}: the log loss results; {\bf FOURTH COLUMN}: the hybrid loss results. The PGMs are superimposed on the images with green node and red node indicate correct and incorrect predictions respectively. Note OT, HS, HF, HG, KS denote five action classes: \emph{others, handshake, highfive, hug, kiss}. The edges (line segments in pink) come from the interaction annotation in the dataset, and are used to model the dependency between two action variables of subjects.}
\label{fig:tvhi_pred_vis}
\end{figure*}

%To estimate $w$, one can use the following max-margin formulation 
% \begin{subequations}
% \label{eq:maxmargin-allconstraints}
%\begin{align}
%   [w^*,\xi^*] = &\min_{w,\xi} \frac{1}{n}\sum_{i=1}^{n}\xi_i + \frac{\lambda}{2}\|w\|^2 \label{eq:emp-risk-hinge-loss}\\
%    s.t. ~ -E(x_i,y_i;w) &+E(x_i,y;w) \ge \Delta(y,y_i)-\xi_i, \label{eq:emp-risk-hinge-loss-cons}\\
%   &\xi_i \ge 0,
%    \forall i,y\in \Ycal\nonumber.
%\end{align}
%\end{subequations}

Following the work of SVMs for structured prediction \cite{TsoJoaHofAlt05}, the hinge loss in this case is
\begin{align}
\ell_{H}(x,y,w) =[\Delta(y^{\dag},y) + E(x,y;w) -  E(x,y^{\dag};w)]_+.
\label{eq:ghinge}
\end{align}
where 
\begin{align}
y^{\dag}_{i} & =  \argmin_{y}(E(y,x_i;w)-\Delta_{\text{}}(y,y_i)).
\label{eq:sub-sup}
\end{align}
Here $\Delta$ is a label cost function (or distance) that describes the discrepancy of two labels.
We use the popular Hamming distance, which is defined as
\begin{align}
\label{eq:hamming_loss}
\Delta_{\text{Ham}}(y,y') = \frac{1}{L}\sum_{j=1}^L \delta(y^{j}\ne y'^{j}),
\end{align}
where $y = (y^{j})_{j=1}^{L}$ and 
$y' =(y'^{j})_{j=1}^{L}$.
%
%%{\bf Constraint Generation}
%Solving the optimisation problem \eq{eq:maxmargin-allconstraints} presents a challenge since it has exponentially many constraints. We resort to a constraint generation strategy \cite{TsoJoaHofAlt05}, which guarantees to find an $\epsilon$-close solution with at most $O({\epsilon}^{-2})$ constraints. The key idea is to find the most violated constraints \eq{eq:emp-risk-hinge-loss-cons}  for the current set of parameters and satisfy them up to some precision. In order to do this, one needs to find \begin{align}
%y^{\dag}_{i} & =  \argmin_{y}(E(y,x_i;w)-\Delta_{\text{Ham}}(y,y_i)).
%\label{eq:sub-sup}
%\end{align}

%This problem can be solved exactly in polynomial time if the objective function is submodular (or `regular', see \cite{KolZab04}). Since $\Delta_{\text{Ham}}$ is linear and therefore always submodular, it suffices to impose submodularity in $E$, what can be achieved by adding extra linear constraints in $w$ (\eg non-negativity) during the estimation procedure.

%{\bf (Sub)gradients}

%with its subgradients:
%\begin{align}&\partial_w \ell_{H}(x,y,w) \ni  \\ &\one\Big(C\Delta(y^{\dag},y) + E(x,y;w) >  E(x,y^{\dag};w)  \Big) \nonumber\\
%&\times\Big(\sum_{j\in \Vcal}{\phi_{\Vcal}(y^{j},x)}- \sum_{j\in \Vcal}{\phi_{\Vcal}(y^{\dag j},x)}\Big),
%\end{align}
%where $\one(z) = 1$ if $z$ is true, and 0 otherwise. 

For the log loss, let $E(x,y;w) = -f_{y}(x)$. Thus, according to \eq{eq:prob} we have
\begin{align}
p(y|x;w) = \frac{\exp(-E(x,y;w))}{\sum_{y' \in \Ycal} \exp(-E(x,y';w))}.
\end{align}
So the log loss is
\begin{align}
&\ell_{L}(x,y,w) =  -\log\Big(\frac{\exp(-E(x,y;w))}{\sum_{y' \in \Ycal} \exp(-E(x,y';w))}\Big)\\
& = E(x,y;w) + \log\Big({\sum_{y' \in \Ycal} \exp(-E(x,y';w))}\Big).
\label{eq:log-loss-z}
\end{align}
%whose gradient is given by
%\begin{align}&\partial_w \ell_{L}(x,y,w) \nonumber \\&=  \sum_{j\in \Vcal}\Big({\phi_{\Vcal}(y^{j},x)}- \EE_{p(y^{j}|x;w)}[{\phi_{\Vcal}(y^{\dag j},x)}]\Big).
%\end{align}

According to \eq{eq:hybrid} the hybrid loss is 
\begin{align}
\ell_{\alpha}(x,y,w) = \alpha \ell_L(x,y,w)\!+\!(1-\alpha)\ell_H(x,y,w).
\label{eq:hybrid-har}
\end{align}
%
%One can estimate $w$ via minimising the regulariser empirical risk of the hybrid loss,
%\begin{align}
%\label{eq:train_tvhi}
%\min_w\frac{1}{m}\sum_{i=1}^m \ell_{\alpha}(x_i,y_i,w) +\frac{\lambda}{2}\|w\|^2.
%\end{align}
The sub-gradient of the hybrid loss is simply a convex combination of the sub-gradient of the hinge loss and the gradient of the log loss. It is known that the sub-gradient of the hinge loss can be computed via standard MAP inference techniques, and the gradient of the log loss can be computed via standard marginal inference techniques. We use the max-product algorithm for the hinge loss and the sum-product algorithm for the log loss.   

To accelerate the training, we apply the stochastic sub-gradient method from \citep{shalev2007pegasos} to the hybrid loss. Here the maximum number of iterations is set to be 30 and the min-batch size is set to be 10. Once the parameters are learned, we use the standard max-product algorithm to make prediction on testing data.

%\begin{table}[t]
%\centering{
%\caption{Accuracies by different losses. The best is in bold.}
%\label{tab:tvhi_accu}
%\begin{tabular}{lccc}
%\hline
%Loss & Hinge & Log & Hybrid\\
%\hline
%Accuracy & 68.49 & 72.94 & \textbf{73.05}\\
%\hline
%\end{tabular}}\centering
%\end{table}

%Table \ref{tab:tvhi_accu} shows the classification accuracy using hinge, log and hybrid losses respectively. Clearly the log and hybrid losses outperforms the hinge loss significantly. Compared to the log loss, the hybrid loss achieves slightly better result on classification accuracy. In addition,

In order to evaluate the recognition performance of different losses, we show the confusion matrices in Figure~\ref{fig:conf_mat_tvhi}. It can be seen that the hybrid loss achieves the best true positive rates on 3 classes (OT, HG and KS) out of 5 action classes, while the log loss and the hinge loss perform best on the HS class and the HF class respectively. Note all losses perform much worse on the HF class than the rest classes. This is because the training set is highly biased as the number of persons performing the \emph{high-five} action in the training set is much less than other classes.

We also give some recognition examples as that shown in Figure~\ref{fig:tvhi_pred_vis}. The first column shows four input images, each containing multiple persons with occlusions making the recognition task difficult. As we can see, hinge loss performs worst with 10 persons out of 18 mislabelled. The log loss outperforms the hinge loss in general as 5 persons are misclassified. For the hybrid loss, persons in all images are perfectly classified except for the third image, where all 3 persons are misclassified.

\section{Conclusion and Discussion}\label{sec:conclusions}

We have provided theoretical and empirical motivation for the use of
a novel hybrid loss for multiclass and structured prediction
problems which can be used in place of the more common log loss or
multiclass hinge loss. This new loss attempts to blend the strength
of purely discriminative approaches to classification, such as
Support Vector machines, with probabilistic approaches, such as
Conditional Random Fields. Theoretically, the hybrid loss enjoys
better consistency guarantees than the hinge loss while
experimentally we have seen that the addition of a purely
discriminative component can improve accuracy when data is less
prevalent.

In general the consistency condition may not hold if $\alpha$ is selected by cross-validation. For example, when the selected $\alpha$ is very small. However, we observe the selected $\alpha$ values in our experiments are always very close to 1.

\subsection{Future Work}
Theoretically, we expect that some stronger sufficient conditions on $\alpha$ are
possible since the bounds used to establish Theorem~\ref{thm:fcc}
are not tight. Our conjecture is that a necessary and sufficient
condition would include a dependency on the number of classes.
We are also investigating connections between $\alpha$ and the multiclass Tsybakov 
noise condition \citep{Che06}.

To our knowledge, the notion of a regular function class for the purposes of
consistency analysis is novel.
Characterisations of this property for existing parametric models would make 
testing for regularity easier.

In structured prediction, there is still a big gap between the analysis and the practice. For example, in structured prediction, we know the parametric hinge loss is not consistent for binary label cost function, but we don't know whether the parametric hybrid loss is. Moreover, we don't have theoretical results for general label cost functions. To better connect our theory with actual practice on structured prediction problems, we plan to investigate consistency for general cost functions (e.g. Hamming loss) that are more commonly used in these problems.

%One current limitation of the hybrid model is the use of a single, fixed $\alpha$ for 
%all observations in a training set. 
%One interesting avenue to explore would be trying to
%dynamically estimate a good value of $\alpha$ on a per-observation
%basis. This may further improve the efficacy of the hybrid loss by
%exploiting the robustness of SVMs (low $\alpha$) when the label
%distribution for an observation has a dominant class but switching
%to probability estimation via CRFs (high $\alpha$) when this is not
%the case.

%For the future work on the plant segmentation task, we would like to classify finer parts of the plant such as stem, leaf, leaf tip and so on. The labels of these parts have strong dependency of the label assignments on their neighbouring parts. Hence treating it as a more sophiscated structured prediction problem might produce superior result. How to efficiently learn such a complex model with a stable and consistent prediction is future work.     

% use section* for acknowledgement
\ifCLASSOPTIONcompsoc
  % The Computer Society usually uses the plural form
  \section*{Acknowledgments}
\else
  % regular IEEE prefers the singular form
  \section*{Acknowledgment}
\fi

The bulk of this research was performed while Q. Shi was with NICTA. NICTA is funded by the Australian Government as represented by the Department of Broadband, Communications and the Digital Economy and the Australian Research Council through the ICT Centre of Excellence program.

This research was partly supported under Australian Research Council Discovery Projects funding scheme (DP0988439 and DP140102270) and Australian Research Council Discovery Early Career Researcher Award funding scheme (DE120101161) and supported by The Australian Centre for Visual Technologies and The Computer Vision group of The University of Adelaide.

We would like to thank the anonymous reviewers of an earlier version of this work 
for their constructive feedback, particularly regarding the Theorems 1 and 2.

% Can use something like this to put references on a page
% by themselves when using endfloat and the captionsoff option.
\ifCLASSOPTIONcaptionsoff
  \newpage
\fi

% trigger a \newpage just before the given reference
% number - used to balance the columns on the last page
% adjust value as needed - may need to be readjusted if
% the document is modified later
%\IEEEtriggeratref{8}
% The "triggered" command can be changed if desired:
%\IEEEtriggercmd{\enlargethispage{-5in}}

% references section

% can use a bibliography generated by BibTeX as a .bbl file
% BibTeX documentation can be easily obtained at:
% http://www.ctan.org/tex-archive/biblio/bibtex/contrib/doc/
% The IEEEtran BibTeX style support page is at:
% http://www.michaelshell.org/tex/ieeetran/bibtex/
%\bibliographystyle{IEEEtranS}
\bibliographystyle{plain}
% argument is your BibTeX string definitions and bibliography database(s)
\bibliography{bibfile,smmcrf}

% biography section
%
% If you have an EPS/PDF photo (graphicx package needed) extra braces are
% needed around the contents of the optional argument to biography to prevent
% the LaTeX parser from getting confused when it sees the complicated
% \includegraphics command within an optional argument. (You could create
% your own custom macro containing the \includegraphics command to make things
% simpler here.)
%\begin{biography}[{\includegraphics[width=1in,height=1.25in,clip,keepaspectratio]{mshell}}]{Michael Shell}
% or if you just want to reserve a space for a photo:

\begin{IEEEbiography}[{\includegraphics[width=1in,height=1.25in,clip,keepaspectratio]{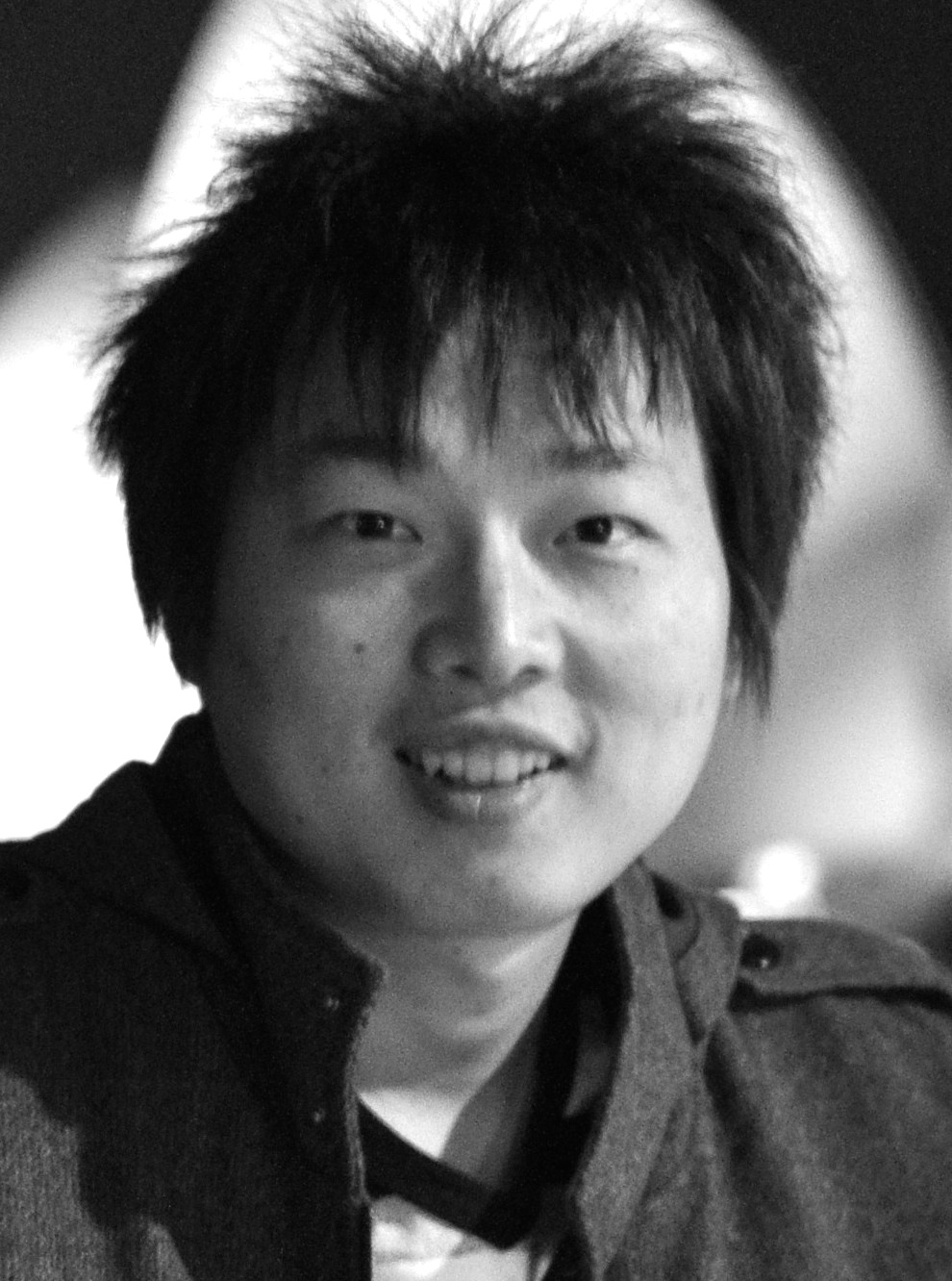}}]{Qinfeng Shi} is  a DECRA research fellow in The Australian Centre for Visual Technologies and the School of Computer Science, The University of Adelaide.  He received a PhD in computer science in 2011 at The Australian National University (ANU) after completing Bachelor and Master study in computer science and Technology in 2003 and 2006 at The Northwestern Polytechnical University (NPU).
\end{IEEEbiography}
\begin{IEEEbiography}[{\includegraphics[width=1in,height=1.25in,clip,keepaspectratio]{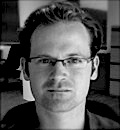}}]{Mark Reid}
is a Research Fellow at The Australian National University in Canberra. 
He received a PhD in machine learning in 2007 from the University of New South Wales
after completing a Bachelor of Science with honours in Pure Mathematics and Computer Science in 1996 from the same institution. In between, he worked as a research scientist at various companies including IBM and Canon.
\end{IEEEbiography}
\begin{IEEEbiography}[{\includegraphics[width=1in,height=1.25in,clip,keepaspectratio]{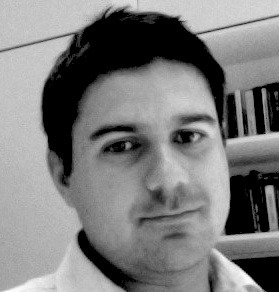}}]{Tiberio Caetano} received the BSc degree in
electrical engineering (with research in physics)
and the PhD degree in computer science, (with
highest distinction) from the Universidade Federal
do Rio Grande do Sul (UFRGS), Brazil. The
research part of the PhD program was undertaken
at the Computing Science Department at
the University of Alberta, Canada. He is a principal researcher with the Statistical Machine Learning
Group at NICTA, an adjunct senior fellow at the Research
School of Computer Science, Australian National
University, and a honorary researcher at the School of Information Technologies,
The University of Sydney.

\end{IEEEbiography}
\begin{IEEEbiography}[{\includegraphics[width=1in,height=1.25in,clip,keepaspectratio]{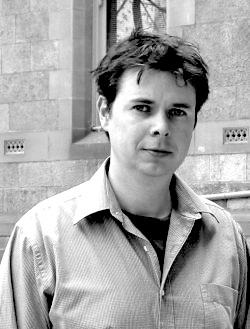}}]{Anton van~den~Hengel}
Prof van den Hengel is the founding Director of The Australian Centre for Visual Technologies (ACVT).  Prof van den Hengel received a PhD in Computer Vision in 2000, a Masters Degree in Computer Science in 1994, a Bachelor of Laws in 1993, and a Bachelor of Mathematical Science in 1991, all from The University of Adelaide.
 \end{IEEEbiography}
 \begin{IEEEbiography}[{\includegraphics[width=1in,height=1.25in,clip,keepaspectratio]{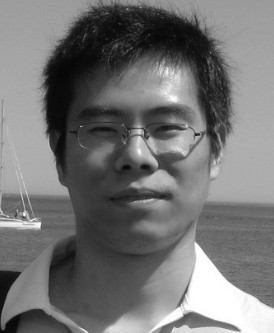}}]{Zhenhua Wang}
 is a Ph.D Candidate in The Australian Centre for Visual Technologies and the School of Computer Science, The University of Adelaide. He is supervised by  Prof. Anton van den Hengel, Dr. Qinfeng Shi and Dr. Anthony Dick. He received Bachelor's degree in 2007, and Master's degree in 2010, both from Northwest A\&F University.
  \end{IEEEbiography}

% You can push biographies down or up by placing
% a \vfill before or after them. The appropriate
% use of \vfill depends on what kind of text is
% on the last page and whether or not the columns
% are being equalized.

\vfill

% Can be used to pull up biographies so that the bottom of the last one
% is flush with the other column.
%\enlargethispage{-5in}

% if have a single appendix:
%\appendix[Proof of the Zonklar Equations]
% or
%\appendix  % for no appendix heading
% do not use \section anymore after \appendix, only \section*
% is possibly needed

% use appendices with more than one appendix
% then use \section to start each appendix
% you must declare a \section before using any
% \subsection or using \label (\appendices by itself
% starts a section numbered zero.)
%

\cleardoublepage \onecolumn

% that's all folks
\end{document}